\newcommand{\ourmaintitle}{Testing Conditional Independence on Discrete Data using~Stochastic~Complexity}
\newcommand{\oururl}{\url{https://eda.mmci.uni-saarland.de/sci}}
\newif\iffinal\finaltrue 
\newif\ifdraft\finalfalse 
\newcommand{\by}{\@ifstar%
  \BYstar%
  \BYnoStar%
}
\newcommand{\BYstar}[2]{#1\nobreakdash-by\nobreakdash-#2}
\newcommand{\BYnoStar}[2]{\ensuremath{#1\text{\nobreakdash-by\nobreakdash-}#2}}
	\DeclareMathOperator*{\argmax}{arg\, max}
	\DeclareMathOperator*{\argmin}{arg\, min}
	\DeclareMathOperator*{\loglog}{log\, log}
\providecommand{\note}[1]{{{\color{red} #1}}}
    \newcommand{\note}[1]{{}}%
    \renewcommand{\note}[1]{{}}%
    \newcommand{\note}[2][red]{{{\color{#1} #2}}}%
    \renewcommand{\note}[2][red]{{{\color{#1} #2}}}%
\definecolor{yafcolor1}{rgb}{0.4, 0.165, 0.553}
\definecolor{yafcolor2}{rgb}{0.949, 0.482, 0.216}
\definecolor{yafcolor3}{rgb}{0.47, 0.549, 0.306}
\definecolor{yafcolor4}{rgb}{0.925, 0.165, 0.224}
\definecolor{yafcolor5}{rgb}{0.141, 0.345, 0.643}
\definecolor{yafcolor6}{rgb}{0.965, 0.933, 0.267}
\definecolor{yafcolor7}{rgb}{0.627, 0.118, 0.165}
\definecolor{yafcolor8}{rgb}{0.878, 0.475, 0.686}
\definecolor{indigo(web)}{rgb}{0.29, 0.0, 0.51}
\definecolor{internationalorange}{rgb}{1.0, 0.31, 0.0}
\definecolor{green(ryb)}{rgb}{0.4, 0.69, 0.2}
\definecolor{richelectricblue}{rgb}{0.03, 0.57, 0.82}
\definecolor{goldenpoppy}{rgb}{0.99, 0.76, 0.0}
\definecolor{crimson}{rgb}{0.86, 0.08, 0.24}
\definecolor{airforceblue}{rgb}{0.36, 0.54, 0.66}
\colorlet{mambacolor1}{indigo(web)}
\colorlet{mambacolor2}{internationalorange}
\colorlet{mambacolor3}{green(ryb)}
\colorlet{mambacolor4}{richelectricblue}
\colorlet{mambacolor5}{goldenpoppy}
\colorlet{mambacolor6}{crimson}
\colorlet{mambacolor7}{airforceblue}
\pgfplotsset{
	%
	/tikz/normal shift/.code 2 args = {%
		\pgftransformshift{%
			\pgfpointscale{#2}{\pgfplotspointouternormalvectorofticklabelaxis{#1}}%
		}%
	},%
	eda line/.style={
		no markers,
		cycle list name		= mamba,
		tick align        	= outside,
		scaled ticks      	= false,
		enlargelimits     	= false,
		ticklabel shift   	= {10pt},
		axis lines*       	= left,
		line cap          	= round,
		clip              	= false,
		tick style    		= {thin, black, major tick length=2pt},
		x tick label style 	= {font=\scriptsize, yshift = 1pt},
		y tick label style 	= {font=\scriptsize, xshift = 1pt},
		xtick style       	= {normal shift={x}{10pt}},
		ytick style       	= {normal shift={y}{10pt}},
		x axis line style 	= {thick,normal shift={x}{10pt}},
		y axis line style 	= {thick,normal shift={y}{10pt}},
		x label style 		= {at={(axis description cs:0.5,0)}, normal shift={x}{16pt}, anchor=north, font=\scriptsize},
		y label style 		= {at={(axis description cs:0,0.5)}, normal shift={y}{24pt}, anchor=south, font=\scriptsize},
		legend cell align 	= left,
		legend style 		= {inner sep = 1pt, cells = {font=\scriptsize}, },
		legend image code/.code={%
			\draw[mark repeat=2,mark phase=2,#1] 
			plot coordinates { (0cm,0cm) (0.15cm,0cm) (0.3cm,0cm) };%
		}
	}
}
\pgfplotsset{
	eda ybar/.style={
		ybar,
		area legend,
		ymajorgrids,
		no markers,
		axis on top,
		xtick				= data,
		cycle list name    	= mamba-bar,
		tick align        	= outside,
		enlargelimits     	= false,
		xmajorgrids 		= false,
		bar width			= 0.7em,
		major grid style	= white,
		axis lines* 		= left,			
		tick style    		= {thin, black, major tick length=2pt},
		major y tick style	= {draw=none},
		x tick label style 	= {font=\scriptsize, yshift=1pt},
		y tick label style = {font=\scriptsize, xshift=1pt},
		x axis line style 	= {thick, normal shift={x}{0pt}},
		y axis line style	= {opacity=0},	
		x label style 		= {at={(axis description cs:0.5,0)}, normal shift={x}{6pt}, anchor=north, font=\scriptsize},
		y label style 		= {at={(axis description cs:0,0.5)}, normal shift={y}{14pt}, anchor=south, font=\scriptsize},
		legend image post style={scale=0.25},
		legend style 		= {inner sep=1pt, cells={font=\scriptsize}, },
		legend cell align 	= left
	}
}
\pgfplotsset{
	eda ybar log/.style={
		eda ybar,
		area legend,
		ymajorgrids,
		no markers,
		axis on top,
		xtick				= data,
		cycle list name    	= mamba-bar,
		tick align        	= outside,
		enlargelimits     	= false,
		xmajorgrids 		= false,
		bar width			= 0.7em,
		major grid style	= white,
		axis lines* 		= left,			
		tick style    		= {thin, black, major tick length=2pt},
		minor y tick style	= {draw=none},
		major y tick style	= {draw=none},
		x tick label style 	= {font=\scriptsize, yshift=1pt},
		y tick label style = {font=\scriptsize, xshift=1pt},
		x axis line style 	= {thick, normal shift={x}{0pt}},
		y axis line style	= {opacity=0},	
		x label style 		= {at={(axis description cs:0.5,0)}, normal shift={x}{10pt}, anchor=north, font=\scriptsize},
		y label style 		= {at={(axis description cs:0,0.5)}, normal shift={y}{4pt}, anchor=south, font=\scriptsize},
		legend image post style={scale=0.25},
		legend style 		= {inner sep=1pt, cells={font=\scriptsize}, },
		legend cell align 	= left
	}
}
\pgfplotsset {
	eda scatter3/.style={
		only marks,
		mark				= *,
		cycle list name		= mamba,
		scaled ticks      	= false,
		enlargelimits     	= false,
		axis lines*			= left,
		mark size			= 0.75pt,
		tick pos 			= left,
		tick align			= outside,
		ticklabel shift   	= {10pt},
		clip              	= false,
		tick style    		= {thin, black, major tick length=2pt},
		xtick style       	= {normal shift={x}{10pt}},
		ytick style       	= {normal shift={y}{10pt}},
		x tick label style 	= {font=\scriptsize, yshift = 1pt},
		y tick label style 	= {font=\scriptsize, xshift = 1pt},
		x axis line style 	= {thick, normal shift={x}{10pt}},
		y axis line style 	= {thick,normal shift={y}{10pt}},
		x label style 		= {at={(axis description cs:0.5,0)}, normal shift={x}{12pt}, anchor=north, font=\scriptsize},
		y label style 		= {at={(axis description cs:0,0.5)}, normal shift={y}{16pt}, anchor=south, font=\scriptsize},
		scatter/use mapped color={draw=indigo(web),fill=indigo(web)},
		legend cell align 	= left,
		legend style 		= {inner sep = 1pt, cells = {font=\scriptsize}, },
		legend image code/.code={%
			\draw[mark repeat=2,mark phase=2,#1] 
			plot coordinates { (0cm,0cm) (0.15cm,0cm) (0.3cm,0cm) };%
		}
	}
}
\pgfplotsset {
	eda scatter4/.style={
		mark				= *,
		cycle list name		= mamba,
		scaled ticks      	= false,
		enlargelimits     	= false,
		axis lines*			= left,
		mark size			= 0.75pt,
		tick pos 			= left,
		tick align			= outside,
		ticklabel shift   	= {10pt},
		clip              	= false,
		tick style    		= {thin, black, major tick length=2pt},
		xtick style       	= {normal shift={x}{10pt}},
		ytick style       	= {normal shift={y}{10pt}},
		x tick label style 	= {font=\scriptsize, yshift = 1pt},
		y tick label style 	= {font=\scriptsize, xshift = 1pt},
		x axis line style 	= {thick, normal shift={x}{10pt}},
		y axis line style 	= {thick,normal shift={y}{10pt}},
		x label style 		= {at={(axis description cs:0.5,0)}, normal shift={x}{12pt}, anchor=north, font=\scriptsize},
		y label style 		= {at={(axis description cs:0,0.5)}, normal shift={y}{16pt}, anchor=south, font=\scriptsize},
		scatter/use mapped color={draw=indigo(web),fill=indigo(web)},
		legend cell align 	= left,
		legend style 		= {inner sep = 1pt, cells = {font=\scriptsize}, },
		legend image code/.code={%
			\draw[mark repeat=2,mark phase=2,#1] 
			plot coordinates { (0cm,0cm) (0.15cm,0cm) (0.3cm,0cm) };%
		}
	}
}
\pgfplotsset{
    box plot/.style={
        /pgfplots/.cd,
        black,
        only marks,
        mark=-,
        tick style    		= {thin, black, major tick length=2pt},
	major y tick style	= {draw=none},
	x tick label style 	= {font=\scriptsize, yshift=1pt},
	y tick label style = {font=\scriptsize, xshift=-2pt},
	x axis line style 	= {thick, normal shift={x}{0pt}},
	y axis line style	= {opacity=0},	
	x label style 		= {at={(axis description cs:0.5,0)}, normal shift={x}{6pt}, anchor=north, font=\scriptsize},
	y label style 		= {at={(axis description cs:0,0.5)}, normal shift={y}{20pt}, anchor=south, font=\scriptsize},
	xtick pos = left,
	ytick pos = left,
        mark size=\pgfkeysvalueof{/pgfplots/box plot width},
        /pgfplots/error bars/y dir=plus,
        /pgfplots/error bars/y explicit,
        /pgfplots/table/x index=\pgfkeysvalueof{/pgfplots/box plot x index},
    },
    box plot box/.style={
        /pgfplots/error bars/draw error bar/.code 2 args={%
            \draw  ##1 -- ++(\pgfkeysvalueof{/pgfplots/box plot width},0pt) |- ##2 -- ++(-\pgfkeysvalueof{/pgfplots/box plot width},0pt) |- ##1 -- cycle;
        },
        /pgfplots/table/.cd,
        y index=\pgfkeysvalueof{/pgfplots/box plot box top index},
        y error expr={
            \thisrowno{\pgfkeysvalueof{/pgfplots/box plot box bottom index}}
            - \thisrowno{\pgfkeysvalueof{/pgfplots/box plot box top index}}
        },
        /pgfplots/box plot
    },
    box plot top whisker/.style={
        /pgfplots/error bars/draw error bar/.code 2 args={%
            \pgfkeysgetvalue{/pgfplots/error bars/error mark}%
            {\pgfplotserrorbarsmark}%
            \pgfkeysgetvalue{/pgfplots/error bars/error mark options}%
            {\pgfplotserrorbarsmarkopts}%
            \path ##1 -- ##2;
        },
        /pgfplots/table/.cd,
        y index=\pgfkeysvalueof{/pgfplots/box plot whisker top index},
        y error expr={
            \thisrowno{\pgfkeysvalueof{/pgfplots/box plot box top index}}
            - \thisrowno{\pgfkeysvalueof{/pgfplots/box plot whisker top index}}
        },
        /pgfplots/box plot
    },
    box plot bottom whisker/.style={
        /pgfplots/error bars/draw error bar/.code 2 args={%
            \pgfkeysgetvalue{/pgfplots/error bars/error mark}%
            {\pgfplotserrorbarsmark}%
            \pgfkeysgetvalue{/pgfplots/error bars/error mark options}%
            {\pgfplotserrorbarsmarkopts}%
            \path ##1 -- ##2;
        },
        /pgfplots/table/.cd,
        y index=\pgfkeysvalueof{/pgfplots/box plot whisker bottom index},
        y error expr={
            \thisrowno{\pgfkeysvalueof{/pgfplots/box plot box bottom index}}
            - \thisrowno{\pgfkeysvalueof{/pgfplots/box plot whisker bottom index}}
        },
        /pgfplots/box plot
    },
    box plot median/.style={
        /pgfplots/box plot,
        /pgfplots/table/y index=\pgfkeysvalueof{/pgfplots/box plot median index}
    },
    box plot width/.initial=1em,
    box plot x index/.initial=0,
    box plot median index/.initial=1,
    box plot box top index/.initial=2,
    box plot box bottom index/.initial=3,
    box plot whisker top index/.initial=4,
    box plot whisker bottom index/.initial=5,
}
\pgfplotsset{
	eda surf/.style={
		view={56}{26},
		axis lines=left,		
		tick style    		= {thin, black, major tick length=2pt},
		 xmajorgrids, x dir= reverse, ymajorgrids, zmajorgrids,
		minor y tick style	= {draw=none},
		major y tick style	= {draw=none},
		major z tick style	= {draw=none},
		x tick label style 	= {font=\scriptsize, yshift=1pt},
		y tick label style 	= {font=\scriptsize, xshift=-3pt, yshift=3pt},
		z tick label style 	= {font=\scriptsize, xshift=1pt},
		z axis line style 		= {thick, normal shift={x}{0pt}},
		x axis line style		= {opacity=0},	
		y axis line style		= {opacity=0},	
		z label style 		= {at={(axis description cs:-0.15,0.5)}, normal shift={z}{10pt}, anchor=south, font=\scriptsize},
		x label style 		= {at={(axis description cs:-0.15,-0.15)}, anchor=south, rotate=-35, font=\scriptsize},
		y label style 		= {at={(axis description cs:0.85,-0.2)}, rotate=15, anchor=south, font=\scriptsize},
		legend image post style={scale=0.25},
		legend style 		= {inner sep=1pt, cells={font=\scriptsize}, },
		legend cell align 	= left,
		grid=major,
		colormap={reverse hot}{
        			indices of colormap={
	            		\pgfplotscolormaplastindexof{hot},...,0 of hot}
    		}
	}
}
\pgfplotsset{
	eda surf2/.style={
		view={56}{26},
		axis lines=left,		
		tick style    		= {thin, black, major tick length=2pt},
		 xmajorgrids, x dir= reverse, ymajorgrids, zmajorgrids,
		minor y tick style	= {draw=none},
		major y tick style	= {draw=none},
		major z tick style	= {draw=none},
		x tick label style 	= {font=\scriptsize, yshift=1pt},
		y tick label style 	= {font=\scriptsize, xshift=-3pt, yshift=3pt},
		z tick label style 	= {font=\scriptsize, xshift=1pt},
		z axis line style 		= {thick, normal shift={x}{0pt}},
		x axis line style		= {opacity=0},	
		y axis line style		= {opacity=0},	
		z label style 		= {at={(axis description cs:-0.15,0.5)}, normal shift={z}{10pt}, anchor=south, font=\scriptsize},
		x label style 		= {at={(axis description cs:-0.15,-0.15)}, anchor=south, rotate=-35, font=\scriptsize},
		y label style 		= {at={(axis description cs:0.85,-0.2)}, rotate=15, anchor=south, font=\scriptsize},
		legend image post style={scale=0.25},
		legend style 		= {inner sep=1pt, cells={font=\scriptsize}, },
		legend cell align 	= left,
		grid=major,
	}
}
\title{\ourmaintitle}
\newcommand{\SC}{\mathit{S}\xspace}
\newcommand{\SCI}{\ensuremath{\mathit{SCI}}\xspace}
\newcommand{\JIC}{\ensuremath{\mathit{JIC}}\xspace}
\newcommand{\CMI}{\ensuremath{\mathit{CMI}}\xspace}
\newcommand{\Penalty}{\ensuremath{\mathcal{R}}\xspace}
\newcommand{\pcmb}{\textsc{PCMB}\xspace}
\newcommand{\pa}{\mathit{PA}\xspace}
\newcommand{\ci}{\mathit{ci}\xspace}
\newcommand{\nml}{\mathit{NML}\xspace}
\newcommand{\fallingfactorial}[1]{%
  ^{\mspace{2mu}\underline{\mspace{-2mu}#1\mspace{-2mu}}\mspace{2mu}}%
}
\newcommand{\risingfactorial}[1]{%
  ^{\mspace{2mu}\overline{\mspace{-2mu}#1\mspace{-2mu}}\mspace{2mu}}%
}
 \newtheorem{theorem}{Theorem}}
 \newtheorem{definition}{Definition}}
 \newtheorem{lemma}{Lemma}}
 \newtheorem{example}{Example}}
\newenvironment{proof}{\paragraph{Proof:}}{\hfill$\square$}
\renewcommand{\models}{\mathcal{M}\xspace}
\newcommand{\regret}{\mathcal{C}\xspace}
\newcommand\myeq{\mkern1.5mu{=}\mkern1.5mu}
\newcommand\independent{\protect\mathpalette{\protect\independenT}{\perp}}
\def\independenT#1#2{\mathrel{\rlap{$#1#2$}\mkern2mu{#1#2}}}
	\tikzstyle{flatlabel}  = [above, font = \tiny, inner sep = 1pt, text = black]
	\tikzstyle{flatlabelb}  = [below, font = \tiny, inner sep = 1pt, text = black]
	\tikzstyle{slopelabel}  = [sloped, above, font = \tiny, inner sep = 1pt, text = black]
	\tikzstyle{slopelabelb}  = [sloped, below, font = \tiny, inner sep = 1pt, text = black]
\definecolor{yafaxiscolor}{rgb}{0.3, 0.3, 0.3}
\definecolor{yafcolor1}{rgb}{0.4, 0.165, 0.553}
\definecolor{yafcolor2}{rgb}{0.949, 0.482, 0.216}
\definecolor{yafcolor3}{rgb}{0.47, 0.549, 0.306}
\definecolor{yafcolor4}{rgb}{0.925, 0.165, 0.224}
\definecolor{yafcolor5}{rgb}{0.141, 0.345, 0.643}
\definecolor{yafcolor6}{rgb}{0.965, 0.633, 0.267}
\definecolor{yafcolor7}{rgb}{0.627, 0.118, 0.165}
\definecolor{yafcolor8}{rgb}{0.878, 0.475, 0.686}
\tikzset{
precise pin/.style args={[#1][#2]#3:#4}{
    pin={[inner sep=0pt, #1, label={[append after command={
		node [#2,
			outer sep = 0pt,
			inner sep=0pt,
			at=(\tikzlastnode),
			anchor=#3+180 ] {#4} } ]center:{}}]#3:{}}
}}
\pgfplotsset{
	clip = false,
	clip marker paths = true,
	tick align=outside,
	x tick label style = {font=\scriptsize, yshift = 1pt},
	y tick label style = {font=\scriptsize, xshift = 1pt},
	major tick length = 2pt,
    every axis y label/.style = {at = {(ticklabel cs:0.5)}, rotate=90, anchor=center, font=\scriptsize, xshift = 2pt},
	every axis x label/.style = {at = {(ticklabel cs:0.5)}, anchor=center, font=\scriptsize, yshift = -2pt},
	axis y line*=left, axis x line*=bottom,
        enlargelimits = 0.03
}
\tikzstyle{every pin}=[font=\footnotesize, inner sep = 0pt, distance=2em]
\tikzstyle{every pin edge}=[line width = 0.1pt, pin distance = 2em]
\newlength{\myheight}
\newlength{\mywidth}
\newcommand{\legDist}{Distance ($\savg{\distc{}}$)}
\newcommand{\legJacc}{Jacc.\ dist.\ ($\savg{\jacc{}}$)}
\newcommand{\legCover}{Coverage ($\cover{}$)}
\newcommand{\legDens}{Density ($\savg{\density{}}$)}
\colorlet{graphcl1}{yafcolor1!50}
\colorlet{graphcl2}{yafcolor4!30}
\colorlet{graphcl3}{yafcolor2!50}
\colorlet{graphcl4}{yafcolor5}
\colorlet{graphcl5}{yafcolor4}
\colorlet{graphcl6}{yafcolor6}
\tikzstyle{graphedge} = [black, thick, opacity = 0.5]
\tikzstyle{graphnode} = [draw = black, circle, line width = 0pt, text = black, inner sep = 0.5pt, text width = 10pt, align = center]
\tikzstyle{outliernode} = [circle, line width = 0pt, draw, text = black, fill = white, inner sep = 0.5pt, text width = 10pt, align = center]
\tikzstyle{toyedge} = [->, black, thick, bend left = 10, yafcolor5]
\tikzstyle{toynode} = [draw = black, thick, circle, line width = 0pt, text = black, inner sep = 0pt, text width = 13pt, align = center]
\tikzstyle{groupline} = [black, thick, dashed]
\tikzset{multicircle/.style  args={#1, (#2)}{%
 alias=tmp@name, %
  postaction={%
    insert path={
     \pgfextra{%
     \pgfpointdiff{\pgfpointanchor{\pgf@node@name}{center}}%
                  {\pgfpointanchor{\pgf@node@name}{east}}%
     \pgfmathsetmacro\insiderad{\pgf@x}%
     \foreach \c [count=\ci from = 0, evaluate=\ci as \angle using 360 - (\ci) * #1] in {#2}%
        \fill[\c] (\pgf@node@name.center)  -- ++(0:\insiderad-\pgflinewidth) arc (0:\angle:\insiderad-\pgflinewidth)--cycle;%
        }}}}}
\pgfplotsset{
  boxplot/box width/.initial=1em,
  solid boxes/.style={
    mark=x,
    boxplot/draw direction=y,
    boxplot/whisker extend=0,
    boxplot/draw/median/.code={%
      \draw[mark size=2pt,/pgfplots/boxplot/every median/.try]
        \pgfextra
        \pgftransformshift{
          \pgfplotsboxplotpointabbox
            {\pgfplotsboxplotvalue{median}}
            {0.5}
        }
        \pgfsetfillcolor{white}
        \pgfuseplotmark{*}
        \endpgfextra
      ;
    },
    boxplot/draw/box/.code={
      \draw[fill,/pgfplots/boxplot/every box/.try]
        ($(boxplot box cs:\pgfplotsboxplotvalue{lower quartile},0.5)!0.5\pgfkeysvalueof{/pgfplots/boxplot/box width}!(boxplot box cs:\pgfplotsboxplotvalue{lower quartile},0)$)
        rectangle
        ($(boxplot box cs:\pgfplotsboxplotvalue{upper quartile},0.5)!0.5\pgfkeysvalueof{/pgfplots/boxplot/box width}!(boxplot box cs:\pgfplotsboxplotvalue{upper quartile},1)$)
      ;
    }
  },
}
\author{
  Alexander Marx\\
  Max Planck Institute for Informatics, \\
  and Saarland University\\
  Saarbr{\"{u}}cken, Germany\\
  \texttt{amarx@mpi-inf.mpg.de} \\
   \And
   Jilles Vreeken \\
   CISPA Helmholtz Center for Information Security,\\
   and Max Planck Institute for Informatics\\
   Saarbr{\"{u}}cken, Germany \\
   \texttt{jv@cispa.saarland} \\
}
\begin{document}

\maketitle

\begin{abstract}
Testing for conditional independence is a core aspect of constraint-based causal discovery. Although commonly used tests are perfect in theory, they often fail to reject independence in practice, especially when conditioning on multiple variables.

We focus on discrete data and propose a new test based on the notion of algorithmic independence that we instantiate using stochastic complexity. Amongst others, we show that our proposed test, \SCI, is an asymptotically unbiased as well as $L_2$ consistent estimator for conditional mutual information (\CMI). 
Further, we show that \SCI can be reformulated to find a sensible threshold for \CMI that works well on limited samples. Empirical evaluation shows that \SCI has a lower type II error than commonly used tests. As a result, we obtain a higher recall when we use \SCI in causal discovery algorithms, \textit{without} compromising the precision.
\end{abstract}

\section{Introduction}
\label{sec:intro}

Testing for conditional independence plays a key role in causal discovery~\citep{spirtes:00:book}. If the true probability distribution of the observed data is faithful to the underlying causal graph, conditional independence tests can be used to recover the undirected causal network. In essence,
under the faithfulness assumption~\citep{spirtes:00:book} finding that two random variables $X$ and $Y$ are conditionally independent given a set of random variables $Z$, denoted as $X \independent Y \mid Z$, implies that there is no direct causal link between $X$ and $Y$.

As an example, consider Figure~\ref{fig:d_separation}. Nodes $F$ and $T$ are d-separated given $D,E$. Based on the faithfulness assumption, we can identify this from i.i.d. samples of the joint distribution, as $F$ will be independent of $T$ given $D,E$. In contrast, $D \not \independent T \mid E,F$, as well as $E \not \independent T \mid D,F$.

Conditional independence testing is also important for recovering the Markov blanket of a target node---i.e. the minimal set of variables, conditioned on which all other variables are independent of the target~\citep{pearl:88:firstmb}. There exist classic algorithms that find the correct Markov blanket with provable guarantees~\citep{margaritis:00:gs,pena:07:pcmb}. These guarantees, however, only hold under the faithfulness assumption and given a \textit{perfect} independence test.

In this paper, we are not trying to improve these algorithms, but rather propose a new independence test to enhance their performance. Recently a lot of work focuses on tests for continuous data; methods ranging from approximating continuous conditional mutual information~\citep{runge:18:continuouscmi} to kernel based methods~\citep{zhang:11:kernel}, we focus on discrete data.

\begin{figure}[t]
		\centering
		\includegraphics[]{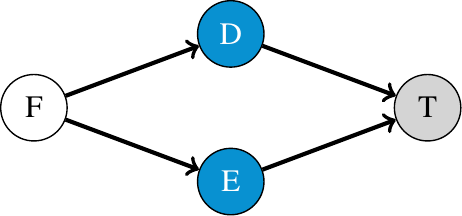}
	\caption{[d-Separation] Given the above causal DAG it holds that $F \independent T \mid D,E$, or $F$ is d-separated of $T$ given $D, E$ under the faithfulness assumption. Note that $D \not\independent T \mid E,F$ and $E \not\independent T \mid D,F$.}
	\label{fig:d_separation}
\end{figure}

For discrete data, two tests are frequently used in practice, the $G^2$ test~\citep{aliferis:10:hiton:overview,schluter:14:survey:mb} and conditional mutual information (\CMI)~\citep{zhang:10:iamb:lambda}. While the former is theoretically sound, it is very restrictive as it has a high sample complexity; especially when conditioning on multiple random variables. When used in algorithms to find the Markov blanket, for example, this leads to low recall, as there it is necessary to condition on larger sets of variables. 

If we had access to the true distributions, conditional mutual information would be the perfect criterium for conditional independence. Estimating \CMI purely from limited observational data leads, however, to discovering spurious dependencies---in fact, it is likely to find no independence at all~\citep{zhang:10:iamb:lambda}. To use \CMI in practice, it is therefore necessary to set a threshold. This is not an easy task, as the threshold should depend on both the domain sizes of the involved variables as well as the sample size~\citep{goebel:05:gamma}. Recently, \cite{canonne:18:sublinear:samples} showed that instead of exponentially many samples, theoretically \CMI has only a sub-linear sample complexity, although an algorithm is not provided.
Closest to our approach is the work of \cite{goebel:05:gamma} and \cite{suzuki:16:jic}. The former show that the empirical mutual information follows the gamma distribution, which allows them to define a threshold based on the domain sizes of the variables and the sample size. The latter employs an asymptotic formulation to determine the the threshold for \CMI.

The main problem of existing tests is that these struggle to find the right balance for limited data: either they are too restrictive and declare everything as independent or not restrictive enough and do not find any independence. To tackle this problem, we build upon algorithmic conditional independence, which has the advantage that we not only consider the statistical dependence, but also the complexity of the distribution. Although algorithmic independence is not computable, we can instantiate this ideal formulation with stochastic complexity. In essence, we compute stochastic complexity using either factorized or quotient normalized maximum likelihood (fNML and qNML)~\citep{silander:08:nml:bayesnet,silander:18:qnml}, and formulate \SCI, the \textit{\textbf{S}tochastic complexity based \textbf{C}onditional \textbf{I}ndependence criterium}.

Importantly, we show that we can reformulate \SCI to find a natural threshold for \CMI that works very well given limited data and diminishes given enough data. In the limit, we prove that \SCI is an asymptotically unbiased and $L_2$ consistent estimator of \CMI . For limited data, we find that the qNML threshold behaves similar to \cite{goebel:05:gamma}---i.e. it considers the sample size as well as the dimensionality of the data. The fNML threshold, however, additionally considers the estimated probability mass functions of the conditioning variables. In practice, this reduces the type II error. Moreover, when applying \SCI based on fNML in constraint based causal discovery algorithms, we observe a higher precision and recall than related tests. In addition, in our empirical evaluation \SCI shows a sub-linear sample complexity.

In this work we build upon and extend the basic ideas we first presented as~\citep{marx:18:climb}. Here we specifically focus on the theory and properties of using stochastic complexity for measuring conditional independence. Those readers that are interested in how SCI can be used in the discovery of directed Markov blankets we refer to~\citep{marx:18:climb}.

For conciseness, we postpone some proofs and experiments to the supplemental material. For reproducibility of our experiments we make our code available online\footnote{\oururl} and released an efficient version of \SCI in the R-package \textit{SCCI}.

\section{Conditional Independence Testing}
\label{sec:preliminaries} 

In this section, we introduce the notation and give brief introductions to both standard statistical conditional independence testing, as well as to the notion of algorithmic conditional independence. 

Given three possibly multivariate random variables $X$, $Y$ and $Z$, our goal is to test the conditional independence hypothesis $H_0 \colon X \independent Y \mid Z$ against the general alternative $H_1 \colon X \not \independent Y \mid Z$. The main goal of a good independence test is to minimize the type I and type II error.  The type I error is defined as falsely rejecting the null hypothesis and the type II error is defined as falsely accepting the null hypothesis.

A well known theoretical measure for conditional independence is conditional mutual information based on Shannon entropy~\citep{cover:06:elements}.

\begin{definition}
Given random variables $X$, $Y$ and $Z$. If
\begin{equation}
I(X ; Y \mid Z) := H(X \mid Z) - H(X \mid Z,Y) = 0
\label{eq:shannonindep}
\end{equation}
then $X$ and $Y$ are called statistically independent given $Z$. 
\end{definition}

In theory, conditional mutual information (\CMI) works perfectly as an independence test for discrete data. However, this only holds if we are given the true distributions of the random variables. In practice, those are not given. On a limited sample the plug-in estimator tends to underestimates conditional entropies, and as a consequence, the conditional mutual information is overestimated---even for completely independent data, as in the following Example.

\begin{example}
\label{ex:example}
Given three random variables $X_1$, $X_2$ and $Y$, with resp. domain sizes $1 \, 000, 8$ and $4$. Suppose that we are given $1 \, 000$ samples over their joint distribution and find that $\hat{H}(Y \mid X_1) = \hat{H}(Y \mid X_2) = 0$. That is, $Y$ is a deterministic function of $X_1$, as well as of $X_2$. However, as $|\mathcal{X}_1| = 1 \, 000$, and given only $1 \, 000$ samples, it is likely that we will have only a single sample for each $v \in \mathcal{X}_1$. That is, finding that $\hat{H}(Y \mid X_1) = 0$ is likely due to the limited amount of samples, rather than that it depicts a true (functional) dependency, while $\hat{H}(Y \mid X_2) = 0$ is more likely to be due to a true dependency, since the number of samples $n \gg |\mathcal{X}_2|$---i.e. we have more evidence.
\end{example}

A possible solution is to set a threshold $t$ such that $X \independent Y \mid Z$ if $I(X ; Y \mid Z) \le t$. Setting $t$ is, however, not an easy task, as $t$ is dependent on the quality of the entropy estimate, which by itself strongly depends on the complexity of the distribution and the given number of samples.
Instead, to avoid this problem altogether, we will base our test on the notion of \emph{algorithmic} independence. 

\subsection{Algorithmic Independence}

To define algorithmic independence, we need to give a brief introduction to Kolmogorov complexity. 
The Kolmogorov complexity of a finite binary string $x$ is the length of the shortest binary program $p^*$ for a universal Turing machine $\mathcal{U}$ that generates $x$, and then halts~\citep{kolmogorov:65:information,vitanyi:93:book}. Formally, we have
\[
K(x) = \min \{ |p| \mid p \in \{0,1\}^*, \mathcal{U}(p) = x \} \; .
\]
That is, program $p^*$ is the most succinct \emph{algorithmic} description of $x$, or in other words, the ultimate lossless compressor for that string. To define algorithmic independence, we will also need conditional Kolmogorov complexity, $K(x \mid y) \leq K(x)$, which is again the length of the shortest binary program $p^*$ that generates $x$, and halts, but now given $y$ as input for free. 

By definition, Kolmogorov complexity makes maximal use of any effective structure in $x$; structure that can be expressed more succinctly algorithmically than by printing it verbatim. As such it is the theoretical optimal measure for complexity.  In this point, algorithmic independence differs from statistical independence. In contrast to purely considering the dependency between random variables, it also considers the complexity of the process behind the dependency. 

Let us consider Example~\ref{ex:example} again and let $x_1$, $x_2$, and $y$ be the binary strings representing $X_1, X_2$ and $Y$. As $Y$ can be expressed as a deterministic function of $X_1$ or $X_2$, $K(y \mid x_1)$ and $K(y \mid x_2)$ reduce to the programs describing the corresponding function. As the domain size of $X_2$ is $8$ and $|\mathcal{Y}| = 4$, the program to describe $Y$ from $X_2$ only has to describe the mapping from $8$ to $4$ values, which will be shorter than describing a mapping from $X_1$ to $Y$, since $|\mathcal{X}_1|=1 \, 000$---i.e. $K(y \mid x_2) \le K(y \mid x_1)$ in contrast $\hat{H}(Y \mid X_1) = \hat{H}(Y \mid X_2)$. To reject $Y \independent X \mid Z$, we test whether providing the information of $X$ leads to a shorter program than only knowing $Z$. Formally, we define algorithmic conditional independence as follows~\citep{chaitin:75:algindepcond}.
\begin{definition}
Given the strings $x,y$ and $z$, We 
write $z^*$ to denote the shortest program for $z$, and analogously $(z,y)^*$ for the shortest program for the concatenation of $z$ and $y$. If
\begin{equation}
I_A(x ; y \mid z) := K(x \mid z^*) - K(x \mid (z,y)^*) \stackrel{+}{=} 0 
\label{eq:algorithmicindep}
\end{equation}
holds up to an additive constant that is independent of the data, then $x$ and $y$ are called algorithmically independent given $z$. 
\end{definition}

Due to the halting problem Kolmogorov complexity is not computable, however, nor approximable up to arbitrary precision~\citep{vitanyi:93:book}. The Minimum Description Length (MDL) principle~\citep{grunwald:07:book} provides a statistically well-founded approach to approximate it from above. For discrete data, this means we can use the stochastic complexity for multinomials~\citep{kontkanen:07:histo}, which belongs to the class of refined MDL codes.

\section{Stochastic Complexity for Multinomials}
\label{sec:stochasticcomplexity}

Given $n$ samples of a discrete univariate random variable $X$ with a domain $\mathcal{X}$ of $|\mathcal{X}| = k$ distinct values, $x^n \in
\mathcal{X}^n$, 
let $\hat{\theta}(x^n)$ denote the maximum likelihood estimator for $x^n$. \citet{shtarkov:87:universal} defined the mini-max optimal \textit{normalized maximum likelihood (NML)}
\begin{equation}
P_{\nml}(x^n \mid \models_k) = \frac{P(x^n \mid \hat{\theta}(x^n), \models_k)}{\regret_{\models_k}^n} \; , \label{eq:pnml}
\end{equation}
where the normalizing factor, or regret $\regret_{\models_k}^n$, relative to the model class $\models_k$ is defined as
\begin{equation}
\regret_{\models_k}^n = \sum_{\tilde{x}^n \in \mathcal{X}^n} P(\tilde{x}^n \mid \hat{\theta}(\tilde{x}^n), \models_k) \, . \label{eq:regret}
\end{equation}
The sum goes over every possible $\tilde{x}^n$ over the domain of $X$, and for each considers the maximum likelihood for that data given model class $\models_k$. Whenever clear from context, we drop the model class to simplify the notation---i.e. we write $P_{\nml}(x^n)$ for $P_{\nml}(x^n \mid \models_k)$ and $\regret_{k}^n$ to refer to $\regret_{\models_k}^n$.

For discrete data, assuming a multinomial distribution, we can rewrite Eq.~\eqref{eq:pnml} as~\citep{kontkanen:07:histo}
\[
P_{\nml}(x^n) = \frac{\prod_{j=1}^k \left(\frac{|v_j|}{n} \right)^{|v_j|}}{\regret_{k}^n} \; ,
\]
writing $|v_j|$ for the frequency of value $v_j$ in $x^n$, resp.\ Eq.~\eqref{eq:regret} as
\[
\regret_{k}^n = \sum_{|v_1| + \cdots + |v_k| = n} \frac{n!}{|v_1|! \cdots |v_k|!} \prod_{j=1}^k \left( \frac{|v_j|}{n} \right)^{|v_j|} \ .
\]
\cite{mononen:08:sub-lin-stoch-comp} derived a formula to calculate the regret in sub-linear time, meaning that the whole formula can be computed in linear time w.r.t. $n$.

We obtain the stochastic complexity for $x^n$ by simply taking the negative logarithm of $P_{\nml}$, 
which decomposes into a Shannon-entropy and the log regret
\begin{align}
\SC(x^n) &= - \log P_{\nml}(x^n) \; ,\\
&= n \hat{H}(x^n) + \log \regret_{k}^n \; . \label{eq:sc:unc}
\end{align}

\subsection{Conditional Stochastic Complexity}

Conditional stochastic complexity can be defined in different ways. We consider factorized normalized maximum likelihood (fNML)~\citep{silander:08:nml:bayesnet} and quotient normalized maximum likelihood (qNML)~\citep{silander:18:qnml}, which are equivalent except for the regret terms.

Given $x^n$ and $y^n$ drawn from the joint distribution of two random variables $X$ and $Y$, where $k$ is the size of the domain of $X$. Conditional stochastic complexity using the fNML formulation is defined as
\begin{align}
\SC_f(x^n \mid y^n) &= \sum_{v \in \mathcal{Y}} - \log P_{\nml}(x^n \mid y^n = v) \\
&= \sum_{v \in \mathcal{Y}} |v| \hat{H}(x^n \mid y^n \!= \!v) + \sum_{v\in \mathcal{Y}} \log \regret_{k}^{|v|}, \; \; \label{eq:sc:cond}
\end{align}
where $y^n=v$ denotes the set of samples for which $Y=v$, $\mathcal{Y}$ the domain of $Y$ with domain size $l$, and $|v|$ the frequency of a value $v$ in $y^n$. 

Analogously, we can define conditional stochastic complexity $S_q$ using qNML~\citep{silander:18:qnml}. We prove all important properties of our independence test for both fNML and qNML definitions, but for conciseness, and because $S_f$ performs superior in our experiments, we postpone the definition of $S_q$ and the related proofs to the supplemental material. 

In the following, we always consider the sample size $n$ and slightly abuse the notation by replacing $\SC(x^n)$ by $\SC(X)$, similar so for the conditional case. We refer to conditional stochastic complexity as $\SC$ and only use $\SC_f$ or $\SC_q$ whenever there is a conceptual difference. In addition, we refer to the regret terms of the conditional $\SC(X \mid Z)$ as $\Penalty(X \mid Z)$, where 
\[
\Penalty_f(X \mid Z) = \sum_{z \in \mathcal{Z}} \log \regret_{|\mathcal{X}|}^{|z|} \; .
\]

Next, we show that the regret term is log-concave in $n$, which is a property we need later on.

\begin{lemma}
\label{lemma:log:concave}
For $n \ge 1$, the regret term $\regret_{k}^n$ of the multinomial stochastic complexity of a random variable with a domain size of $k \ge 2$ is log-concave in $n$.
\end{lemma}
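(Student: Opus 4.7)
The statement asserts that $\log \mathcal{C}_k^n$ is concave in $n$, which for a positive integer-indexed sequence is equivalent to the two-point inequality $\mathcal{C}_k^{n-1}\mathcal{C}_k^{n+1} \le (\mathcal{C}_k^n)^2$ for every $n \ge 2$, i.e.\ the consecutive ratio $\mathcal{C}_k^{n+1}/\mathcal{C}_k^n$ is non-increasing in $n$. My plan is to induct on the alphabet size $k$, using the Kontkanen--Myllym\"aki recursion that underlies Mononen's sub-linear-time algorithm cited just before the lemma,
\[
\mathcal{C}_{k+2}^n \;=\; \mathcal{C}_{k+1}^n + \frac{n}{k}\,\mathcal{C}_k^n \quad (k \ge 1).
\]
The base case $k=1$ is immediate since $\mathcal{C}_1^n = 1$ for all $n$, and a constant sequence is trivially log-concave. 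For $k=2$ I would verify the inequality directly from the closed form
\[
\mathcal{C}_2^n = \sum_{r=0}^n \binom{n}{r}\Bigl(\tfrac{r}{n}\Bigr)^r\Bigl(\tfrac{n-r}{n}\Bigr)^{n-r},
\]
pairing the $r$ and $n-r$ summands and applying Stirling-type bounds to reduce the two-point inequality to an elementary one-variable estimate on per-composition contributions.

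For the inductive step, the goal is to pass log-concavity of $\mathcal{C}_{k+1}^n$ and $\mathcal{C}_k^n$ through the recursion to $\mathcal{C}_{k+2}^n$. Because log-concavity is \emph{not} in general preserved under sums, the induction cannot proceed with that property alone; I would strengthen the inductive hypothesis to encode a controlled interlacing/monotonicity condition on the consecutive ratios $\mathcal{C}_k^{n+1}/\mathcal{C}_k^n$ and $\mathcal{C}_{k+1}^{n+1}/\mathcal{C}_{k+1}^n$. Together with positivity of the multiplier $n/k$, such a compatibility guarantees that $\mathcal{C}_{k+1}^n + (n/k)\mathcal{C}_k^n$ is again log-concave. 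Propagating this strengthened invariant through every recursion step is where the bulk of the technical work would go.

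The principal obstacle is precisely this non-preservation of log-concavity under addition: identifying the right auxiliary monotonicity, verifying it at $k=2$, and showing it is inherited by the recursion is where a proof can easily stall. An appealing alternative route I would consider is generating-functional: writing $n!\,\mathcal{C}_k^n/n^n$ as the $n$th Taylor coefficient of $A(x)^k$ with $A(x) = \sum_{m \ge 0} m^m x^m/m!$, which connects to Lambert's tree function via $T(x) = xe^{T(x)}$, and attempting to establish a P\'olya-frequency-type property of $A(x)^k$. If the coefficients of $A(x)^k$ are log-concave, the lemma reduces to understanding the prefactor $n^n/n!$, whose behaviour can be controlled via Stirling.
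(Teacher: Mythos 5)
Your reduction of the lemma to the statement that the consecutive ratio $\regret_k^{n+1}/\regret_k^n$ is non-increasing in $n$ is correct and is exactly the reformulation the paper uses. Beyond that point, however, the proposal is a plan rather than a proof, and the plan stalls precisely at the step you yourself flag as the obstacle. Your induction on the alphabet size via the recursion $\regret_{k+2}^n = \regret_{k+1}^n + \tfrac{n}{k}\regret_k^n$ cannot close as stated: log-concavity is not preserved under addition, and the ``controlled interlacing/monotonicity condition on the consecutive ratios'' that would make the sum log-concave is never identified, let alone verified at the base case or shown to propagate. (Note also that what is needed for a sum $a_n+b_n$ of log-concave sequences to be log-concave is essentially that the ratios $a_{n+1}/a_n$ and $b_{n+1}/b_n$ be suitably ordered or synchronised for \emph{all} $n$ simultaneously; formulating an invariant of this kind that survives the $n$-dependent multiplier $n/k$ is the entire difficulty, and nothing in the proposal addresses it.) The base case $k=2$ is likewise only gestured at: ``pairing the $r$ and $n-r$ summands and applying Stirling-type bounds'' is not carried out, and the two-point inequality for $\regret_2^n$ is not elementary. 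The generating-function alternative is an interesting direction but is again only a conjecture (a P\'olya-frequency property of $A(x)^k$) with no argument supplied.

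For contrast, the paper avoids the induction on $k$ entirely. It fixes the domain size $L$ and works with Mononen and Myllym\"aki's single-sum representation $\regret_L^n = \sum_{k=0}^n m(k,n)$ with $m(k,n) = n\fallingfactorial{k}(L-1)\risingfactorial{k}/(n^k k!)$, then splits the ratio $\regret_L^n/\regret_L^{n-1}$ into two pieces and argues termwise that each piece decreases as $n$ grows. Whatever one thinks of the details of that argument, it sidesteps the sum-of-log-concave-sequences issue that blocks your route. As it stands, your submission identifies the right target and a plausible strategy, but the load-bearing steps are all deferred, so it does not establish the lemma.
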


For conciseness, we postpone the proof of Lemma~\ref{lemma:log:concave} to the supplementary material. Based on Lemma~\ref{lemma:log:concave} we can now introduce Theorem~\ref{th:fmonotone} that is essential for our proposed independence test.

\begin{theorem}
\label{th:fmonotone}
Given three random variables $X$, $Y$ and $Z$, it holds that $\Penalty_f(X \mid Z) \le \Penalty_f(X \mid Z,Y)$.
\end{theorem}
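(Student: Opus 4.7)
The plan is to reduce the claim to a purely one-dimensional statement about the regret sequence. Splitting by $z$, we have
\[
\Penalty_f(X \mid Z, Y) \;=\; \sum_{z \in \mathcal{Z}} \sum_{y \in \mathcal{Y}} \log \regret_{|\mathcal{X}|}^{|z,y|} \quad\text{and}\quad \Penalty_f(X \mid Z) \;=\; \sum_{z \in \mathcal{Z}} \log \regret_{|\mathcal{X}|}^{|z|},
\]
and the counts satisfy the partition identity $|z| = \sum_{y \in \mathcal{Y}} |z,y|$ for every $z$. Hence it suffices to prove, for each fixed $z$, the subadditivity inequality
\[
\log \regret_{|\mathcal{X}|}^{\sum_y |z,y|} \;\le\; \sum_{y \in \mathcal{Y}} \log \regret_{|\mathcal{X}|}^{|z,y|},
\]
so the entire theorem rests on showing that $f(n) := \log \regret_k^n$ is subadditive on the non-negative integers, for every fixed $k \ge 2$.

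For this, I would invoke Lemma~\ref{lemma:log:concave}, which says $f$ is concave in $n$, equivalently that the first differences $f(n+1) - f(n)$ are non-increasing. Together with the boundary value $f(0) = \log \regret_k^0 = \log 1 = 0$ (no data, so NML puts all mass on the empty string), subadditivity follows from a short telescoping argument: for $a, b \ge 0$,
\[
f(a+b) - f(a) \;=\; \sum_{i=1}^{b}\bigl(f(a+i) - f(a+i-1)\bigr) \;\le\; \sum_{i=1}^{b}\bigl(f(i) - f(i-1)\bigr) \;=\; f(b),
\]
where the inequality uses the non-increasing-differences property. Induction on the number of summands then extends this to any partition, giving $f\!\bigl(\sum_y n_y\bigr) \le \sum_y f(n_y)$ with $n_y := |z,y|$.

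Combining the per-$z$ inequalities and summing over $z \in \mathcal{Z}$ yields $\Penalty_f(X \mid Z) \le \Penalty_f(X \mid Z, Y)$, as desired.

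The main obstacle is not the algebra but rather making sure the boundary behavior is invoked cleanly: one needs $f(0)=0$ to anchor the telescoping, and one needs Lemma~\ref{lemma:log:concave} to apply over the full range of counts $n_y$ that appear (including possibly $n_y = 0$ for absent joint values, which contribute $0$ and are thus harmless). Once this is noted, the rest of the argument is a one-line reduction plus an elementary rearrangement of sums.
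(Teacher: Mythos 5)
Your proof is correct and follows essentially the same route as the paper: decompose the regret sums over the refinement of the partition induced by adding $Y$, reduce to subadditivity of $n \mapsto \log \regret_k^n$, and derive that from the log-concavity established in Lemma~\ref{lemma:log:concave}. Your version is in fact slightly more careful, since the paper asserts that subadditivity "follows by definition" from concavity, whereas you correctly note that one also needs the anchor $\log \regret_k^0 = 0$ and supply the telescoping argument that makes the implication rigorous.
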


\begin{proof}
Consider that $Z$ contains $p$ distinct value combinations $\{ r_1, \dots, r_p \}$. If we add $Y$ to $Z$, the number of distinct value combinations, $\{ l_1, \dots, \l_q \}$, increases to $q$, where $p \le q$. Consequently, to show that Theorem~\ref{th:fmonotone} is true, it suffices to show that
\begin{equation}
\sum_{i = 1}^p \log \regret_{k}^{|r_i|} \le \sum_{j = 1}^q \log \regret_{k}^{|l_j|} \, \label{eq:sub:additivity:preparation}
\end{equation}
where $\sum_{i=1}^p |r_i| = \sum_{j=1}^q |l_j| = n$. Next, consider w.l.o.g. that each value combination $\{r_i\}_{i=1, \dots, p}$ is mapped to one or more value combinations in $\{ l_1, \dots, \l_q \}$. Hence, Eq.~\eqref{eq:sub:additivity:preparation} holds, if the $\log \regret_{k}^n$ is sub-additive in $n$. Since we know from Lemma~\ref{lemma:log:concave} that the regret term is log-concave in $n$, sub-additivity follows by definition.
\end{proof}

Now that we have all the necessary tools, we can define our independence test in the next section.

\section{Stochastic Complexity based Conditional Independence}
\label{sec:independence}

With the above, we can now formulate our new conditional independence test, which we will refer to as the \textit{\textbf{S}tochastic complexity based \textbf{C}onditional \textbf{I}ndependence criterium}, or \SCI for short.
\begin{definition}
Let $X$, $Y$ and $Z$ be random variables. We say that $X \independent Y \mid Z$, if
\begin{equation}
\SCI(X;Y \mid Z) := \SC(X \mid Z) - \SC(X \mid Z,Y) \le 0 \; . \label{eq:isc}
\end{equation}
\end{definition}
In particular, Eq.~\refeq{eq:isc} can be rewritten as
\begin{align}
\SCI(X;Y \mid Z) &= n \cdot I(X;Y \mid Z) \\
&+ \Penalty(X \mid Z) - \Penalty(X \mid Z,Y) \; . \label{eq:iscdetail}
\end{align}
From this formulation, we see that the regret terms formulate a threshold $t_{\SC}$ for conditional mutual information, where $t_{\SC} = \Penalty(X \mid Z,Y) - \Penalty(X \mid Z)$. From Theorem~\ref{th:fmonotone} we know that if we instantiate \SCI using fNML that $\Penalty(X \mid Z,Y) - \Penalty(X \mid Z) \ge 0$. Hence, $Y$ has to provide a significant gain such that $X \not \independent Y \mid Z$---i.e. we need $\hat{H}(X \mid Z) - \hat{H}(X \mid Z, Y) > t_{\SC} / n$. 

Next, we show how we can use \SCI in practice by formulating it using fNML.

\subsection{Factorized SCI}

To formulate our independence test based on factorized normalized maximum likelihood, we have to revisit the regret terms again. In particular, $\Penalty_f(X \mid Z)$ is only equal to $\Penalty_f(Y \mid Z)$, when the domain size of $X$ is equal to the domain of $Y$. Further, $\Penalty_f(X \mid Z) - \Penalty_f(X \mid Z,Y)$ is not guaranteed to be equal to $\Penalty_f(Y \mid Z) - \Penalty_f(Y \mid Z,X)$. As a consequence, 
\[
I_{\SC}^f(X;Y \mid Z) := \SC_f(X \mid Z) - \SC_f(X \mid Z,Y)
\]
is not always equal to 
\[
I_{\SC}^f(Y;X \mid Z) := \SC_f(Y \mid Z) - \SC_f(Y \mid Z,X) \; .
\]
To achieve symmetry, we formulate $\SCI_f$ as
\begin{equation}
\SCI_f(X;Y \mid Z) := \max \{ I_{\SC}^f(X;Y \mid Z), I_{\SC}^f(Y;X \mid Z) \}
\end{equation} 
and say that $X \independent Y \mid Z$, if $\SCI_f(X;Y \mid Z) \le 0$.

There are other ways to achieve such symmetry, such as via an alternative definition of conditional mutual information. However, as we show in detail in the supplementary, there exist serious issues with these alternatives when instantiated with fNML. 

Instead of the exact fNML formulation, it is also possible to use the asymptotic approximation of stochastic complexity~\citep{rissanen:96:fisher}, which was done by \cite{suzuki:16:jic} to approximate \CMI. In practice, the corresponding test (\JIC) is, however, very restrictive, which leads to low recall.

In the next section, we show the main properties for \SCI using fNML. 
Thereafter, we compare $\SCI$ to \CMI using the threshold based on the gamma distribution~\citep{goebel:05:gamma}, and empirically evaluate the sample complexity of \SCI.

\subsection{Properties of SCI}

In the following, for readability, we write \SCI to refer to properties that hold for both versions of \SCI, $\SCI_f$ and $\SCI_q$.

First, we show that if $X \independent Y \mid Z$, we have that $\SCI(X;Y \mid Z) \le 0$. Then, we prove that $\frac{1}{n}\SCI$ is an asymptotically unbiased estimator of conditional mutual information and is $L_2$ consistent. Note that by dividing $\SCI$ by $n$ we do not change the decisions we make as long as $n < \infty$. Since we only accept $H_0$ if $\SCI \le 0$, any positive output will still be $> 0$ after dividing it by $n$. 

\begin{theorem}
\label{th:sci_indep}
If $X \independent Y \mid Z$, $\SCI(X;Y\mid Z) \le 0$.
\end{theorem}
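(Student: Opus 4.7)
The plan is to read $\SCI(X;Y\mid Z)$ through the decomposition displayed in Equation~\eqref{eq:iscdetail}, which splits the statistic into $n \cdot I(X;Y\mid Z)$ plus the regret difference $\Penalty(X\mid Z) - \Penalty(X\mid Z,Y)$. Under the independence hypothesis the mutual information term vanishes, so the sign of $\SCI$ is entirely controlled by the regret difference, and that sign is precisely what Theorem~\ref{th:fmonotone} delivers.

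Concretely, I would start from
\[
\SCI(X;Y\mid Z) \;=\; n \cdot I(X;Y\mid Z) + \Penalty(X\mid Z) - \Penalty(X\mid Z,Y),
\]
plug in $I(X;Y\mid Z) = 0$, which follows from the hypothesis $X \independent Y \mid Z$ together with the Shannon-entropy characterisation of statistical conditional independence in Definition~1, and thereby reduce the target inequality to $\Penalty_f(X\mid Z) - \Penalty_f(X\mid Z,Y) \le 0$. Theorem~\ref{th:fmonotone} supplies exactly this, giving $I_{\SC}^f(X;Y\mid Z) \le 0$.

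Since $\SCI_f$ is defined as $\max\{I_{\SC}^f(X;Y\mid Z),\, I_{\SC}^f(Y;X\mid Z)\}$, a second step is needed to handle the symmetrised form. Applying Theorem~\ref{th:fmonotone} with the roles of $X$ and $Y$ exchanged yields $\Penalty_f(Y\mid Z) \le \Penalty_f(Y\mid Z,X)$, hence $I_{\SC}^f(Y;X\mid Z) \le 0$ by the same decomposition. The maximum of two non-positive numbers is non-positive, so $\SCI_f(X;Y\mid Z) \le 0$ as required.

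For the qNML instantiation $\SCI_q$, deferred to the supplement, the skeleton of the argument is identical: decompose, zero out the $I$ term, and bound the regret difference. The one genuinely new ingredient, and the main obstacle, is a qNML analogue of Theorem~\ref{th:fmonotone}: because $\Penalty_q$ is built from a log-ratio of regrets rather than a sum, sub-additivity of $\log \regret_k^n$ cannot be invoked verbatim, and a dedicated monotonicity argument (in the spirit of Lemma~\ref{lemma:log:concave} but tailored to the quotient form) has to be supplied. Once that monotonicity is in hand, the rest of the proof for $\SCI_q$ is a carbon copy of the fNML case.
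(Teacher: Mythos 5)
Your proposal is correct and follows essentially the same route as the paper: decompose $\SCI$ via Eq.~\eqref{eq:iscdetail}, observe that the $n \cdot I(X;Y\mid Z)$ term vanishes under the hypothesis, and invoke Theorem~\ref{th:fmonotone} to bound the regret difference, with the symmetrised $\max$ handled by applying the same argument in both directions (the paper phrases this as a w.l.o.g.\ step). Your remark that the qNML case requires a separate monotonicity result for the quotient of regrets is exactly what the paper supplies in the supplement as Theorem~\ref{th:qmonotone}.
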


\begin{proof}
W.l.o.g. we can assume that $I_{\SC}^f(X;Y \mid Z) >= I_{\SC}^f(Y;X \mid Z)$. Based on this, it suffices to show that $I_{\SC}^f(X;Y \mid Z) \le 0$
if $X \independent Y \mid Z$. As the first part of $I_{\SC}^f$ consists of $n \cdot I(X;Y \mid Z)$, it will be zero by definition. We know that $\Penalty_f(X \mid Z) - \Penalty_f(X \mid Z,Y) \le 0$ (Theorem~\ref{th:fmonotone}), which concludes the proof.
\end{proof}

Next, we show that $\frac{1}{n}\SCI$ converges against conditional mutual information and hence is an asymptotically unbiased estimator of conditional mutual information and is $L_2$ consistent to it.

\begin{lemma}
\label{lemma:scii}
Given three random variables $X$, $Y$ and $Z$, it holds that 
$\lim_{n \rightarrow \infty} \frac{1}{n} \SCI(X; Y \mid Z) = I(X; Y \mid Z)$.
\end{lemma}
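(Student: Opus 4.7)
The plan is to start from equation~\eqref{eq:iscdetail}, which expresses each orientation $I_{\SC}^f(X;Y\mid Z) = \SC_f(X\mid Z) - \SC_f(X\mid Z,Y)$ as $n \cdot \hat{I}(X;Y\mid Z) + \Penalty_f(X\mid Z) - \Penalty_f(X\mid Z,Y)$, divide through by $n$, and show that (i) the regret difference vanishes in the limit and (ii) the empirical plug-in CMI converges to the true CMI. Since $\SCI_f$ is the maximum of the two orientations, and by the symmetry of mutual information $I(X;Y\mid Z) = I(Y;X\mid Z)$, both candidates converge to the same target and hence so does their maximum.

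The first ingredient is the classical asymptotic expansion of the multinomial regret, $\log \regret_k^m = \tfrac{k-1}{2}\log m + O(1)$ as $m \to \infty$ for fixed alphabet size $k$ (Rissanen); alternatively, a uniform majorant $\log \regret_k^m \le \tfrac{k-1}{2}\log n + C_k$ valid for all $m \le n$ suffices. Summing this bound over the finitely many conditioning cells in $\mathcal{Z}$ (respectively $\mathcal{Z} \times \mathcal{Y}$) shows that both $\Penalty_f(X\mid Z)$ and $\Penalty_f(X\mid Z,Y)$ are $O(\log n)$, so their difference divided by $n$ tends to $0$.

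The second ingredient is the standard consistency of the plug-in empirical conditional mutual information on finite alphabets: the empirical joint distribution of $(X,Y,Z)$ converges almost surely to the true joint by the strong law of large numbers, and because the Shannon terms $p \log p$ are continuous and bounded on $[0,1]$, the continuous mapping theorem delivers $\hat{I}(X;Y\mid Z) \to I(X;Y\mid Z)$ almost surely. Combining the two ingredients yields $\tfrac{1}{n} I_{\SC}^f(X;Y\mid Z) \to I(X;Y\mid Z)$ for each orientation, and the $\max$ in the definition of $\SCI_f$ preserves the common limit.

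The only mild subtlety is handling conditioning cells with very small counts, for which the asymptotic expansion of $\log \regret_k^m$ does not apply cleanly. Here the log-concavity of $\regret_k^m$ established in Lemma~\ref{lemma:log:concave}, together with the sub-additivity used in the proof of Theorem~\ref{th:fmonotone}, bounds the aggregated penalty above by a quantity of order $\log \regret_k^n$ plus a constant per non-empty cell—still $O(\log n)$ and hence $o(n)$. I do not foresee a deeper obstacle beyond being precise about this majorization.
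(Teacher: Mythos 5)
Your proposal is correct and follows essentially the same route as the paper: decompose $\tfrac{1}{n}\SCI$ into the plug-in estimate $\hat{I}(X;Y\mid Z)$ plus the normalized regret difference, and use Rissanen's asymptotic $\log \regret_k^n = \tfrac{k-1}{2}\log n + \mathcal{O}(1)$ to show the latter is $\mathcal{O}(\log n)/n \to 0$, handling the two orientations symmetrically. You are somewhat more careful than the paper, which silently identifies the empirical $\hat{I}$ with the true $I$ and does not discuss small-count cells; your explicit appeal to the strong law plus continuity, and your uniform majorization of the per-cell regrets, fill in details the paper's proof glosses over rather than constituting a different argument.
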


\begin{proof}
To show the claim, we need to show that
\[
\lim_{n \rightarrow \infty} I(X;Y \mid Z) + \frac{1}{n} ( \Penalty(X \mid Z) - \Penalty(X \mid Z, Y) ) = 0 \; .
\]
The proof for $I_{\SC}^f(Y;X \mid Z)$ follows analogously. In essence, we need to show that $\frac{1}{n} ( \Penalty(X \mid Z) - \Penalty(X \mid Z, Y) )$ goes to zero as $n$ goes to infinity. From \cite{rissanen:96:fisher} we know that $\log \regret_{k}^n$ asymptotically behaves like $\frac{k-1}{2} \log n + \mathcal{O}(1)$. Hence, $\frac{1}{n} \Penalty(X \mid Z)$ and $\frac{1}{n} \Penalty(X \mid Z, Y)$ will approach zero if $n \rightarrow \infty$.
\end{proof} 

As a corollary to Lemma~\ref{lemma:scii} we find that $\frac{1}{n}\SCI$ is an asymptotically unbiased estimator of conditional mutual information and is $L_2$ consistent to it.

\begin{theorem}
\label{th:unbiased}
Let $X$, $Y$ and $Z$ be discrete random variables. Then $\lim_{n \to \infty} \mathbb{E}[\frac{1}{n} \SCI(X;Y | Z)] = I(X;Y | Z)$, i.e. $\frac{1}{n} \SCI$ is an asymptotically unbiased estimator for conditional mutual information.
\end{theorem}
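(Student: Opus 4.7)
The plan is to upgrade the pointwise convergence established in Lemma~\ref{lemma:scii} to convergence of the expectation via a bounded convergence argument. First I would decompose
\[
\frac{1}{n}\SCI_f(X;Y\mid Z) \;=\; \hat{I}_n(X;Y\mid Z) \;+\; \frac{1}{n}\Delta_n \; ,
\]
where $\hat{I}_n$ is the empirical plug-in estimate of the conditional mutual information and
\[
\Delta_n \;=\; \max\bigl\{\Penalty_f(X\mid Z) - \Penalty_f(X\mid Z,Y),\ \Penalty_f(Y\mid Z) - \Penalty_f(Y\mid Z,X)\bigr\}
\]
collects the two regret differences. This decomposition exploits the symmetry of empirical CMI, $\hat{I}_n(X;Y\mid Z) = \hat{I}_n(Y;X\mid Z)$, so that the data-dependent plug-in term factors out of the $\max$ in the definition of $\SCI_f$, and by linearity of expectation I can treat the two summands separately.

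For the regret piece, I would reuse the Rissanen asymptotic $\log \regret_k^n = \frac{k-1}{2}\log n + \mathcal{O}(1)$ already invoked in the proof of Lemma~\ref{lemma:scii}. Each of the four sums in $\Delta_n$ ranges over at most $|\mathcal{Z}|\cdot|\mathcal{Y}|$ cells, and each $\log\regret_k^{|v|}$ is bounded above by $\log\regret_k^{n}$. Hence $|\Delta_n| = \mathcal{O}(\log n)$ \emph{deterministically}, with a constant depending only on the (fixed) domain sizes. Therefore $\frac{1}{n}\Delta_n \to 0$ surely, and its expectation vanishes.

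For the plug-in piece, on the finite alphabet $\mathcal{X}\times\mathcal{Y}\times\mathcal{Z}$ the empirical joint converges almost surely to the true joint by the strong law of large numbers applied cell-wise, and conditional mutual information is a continuous functional of the joint distribution on the probability simplex. Thus $\hat{I}_n \to I(X;Y\mid Z)$ almost surely, and since $0 \le \hat{I}_n \le \log\min(|\mathcal{X}|,|\mathcal{Y}|)$ is uniformly bounded, the bounded convergence theorem yields $\mathbb{E}[\hat{I}_n] \to I(X;Y\mid Z)$. Adding the two limits gives the claim, and the same argument transfers verbatim to $\SCI_q$ since the qNML regret is of the same logarithmic order.

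The only genuinely delicate point is the interaction between the non-linear $\max$ and the expectation; the decomposition above sidesteps it cleanly because the random quantity $\hat{I}_n$ is common to both arguments of the $\max$, leaving a $\max$ over purely count-driven, deterministically bounded regret terms. Everything else reduces to standard finite-alphabet consistency facts together with the $\mathcal{O}(\log n)$ regret bound, so no new estimates beyond what Lemma~\ref{lemma:scii} already supplies are needed.
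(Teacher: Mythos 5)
Your proof is correct and follows essentially the same route as the paper, which states this theorem as an immediate corollary of Lemma~\ref{lemma:scii}: the regret terms are $\mathcal{O}(\log n)$ and vanish after division by $n$, while the empirical plug-in estimate converges to $I(X;Y\mid Z)$. You additionally supply the step the paper leaves implicit --- factoring the common $\hat{I}_n$ out of the $\max$, the deterministic $\mathcal{O}(\log n)$ bound on the regret difference, and the bounded convergence argument needed to pass from almost-sure convergence to convergence of the expectation --- which makes your write-up more complete than the paper's.
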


\begin{theorem}
\label{th:l2}
Let $X$, $Y$ and $Z$ be discrete random variables. Then $\lim_{n \to \infty} \mathbb{E}[(\frac{1}{n} \SCI(X;Y | Z) - I(X;Y | Z))^2] = 0$ i.e. $\frac{1}{n} \SCI$ is an $L_2$ consistent estimator for conditional mutual information.
\end{theorem}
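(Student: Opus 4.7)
The plan is to decompose $\frac{1}{n}\SCI(X;Y\mid Z) - I(X;Y \mid Z)$ into a plug-in estimation error plus a deterministically controllable regret term, and then to bound each of the two squared contributions separately in expectation. I focus on $\SCI_f$; the $\SCI_q$ case is analogous with the qNML regret in place of the fNML one. Writing $\hat{I}_n(X;Y\mid Z) := \hat{H}(X\mid Z) - \hat{H}(X\mid Z,Y)$ for the plug-in conditional mutual information and using that $\hat{I}_n$ is symmetric in $X$ and $Y$, the maximum in the definition of $\SCI_f$ collapses onto the regret differences, giving
\[
\tfrac{1}{n}\SCI_f(X;Y\mid Z) \;=\; \hat{I}_n(X;Y\mid Z) \;+\; \tfrac{1}{n} R_n ,
\]
with $R_n := \max\bigl\{\Penalty_f(X\mid Z) - \Penalty_f(X\mid Z,Y),\; \Penalty_f(Y\mid Z) - \Penalty_f(Y\mid Z,X)\bigr\}$. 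The inequality $(a+b)^2 \le 2a^2 + 2b^2$ then reduces the theorem to showing $\mathbb{E}[(\hat{I}_n - I(X;Y\mid Z))^2] \to 0$ and $\mathbb{E}[(R_n/n)^2] \to 0$.

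First I would dispatch the regret contribution deterministically. By the asymptotic expansion $\log \regret_{k}^{m} = \frac{k-1}{2}\log m + \mathcal{O}(1)$ already invoked in the proof of Lemma~\ref{lemma:scii}, and since $\Penalty_f(X\mid Z)$ and $\Penalty_f(X\mid Z,Y)$ are each sums of at most $|\mathcal{Z}|\cdot|\mathcal{Y}|$ terms of the form $\log\regret_{|\mathcal{X}|}^{m}$ with $m \le n$, we obtain $|R_n| = \mathcal{O}(\log n)$ with the hidden constant depending only on the (fixed) domain sizes. Hence $(R_n/n)^2 = \mathcal{O}((\log n / n)^2) \to 0$ uniformly, and its expectation vanishes as well.

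Next I would establish $L_2$ consistency of the plug-in estimator. By the strong law of large numbers the empirical joint probabilities $\hat{p}_n(x,y,z)$ converge almost surely to $p(x,y,z)$; since $\hat{I}_n$ is a continuous function of these probabilities under the convention $0\log 0 = 0$, the continuous mapping theorem gives $\hat{I}_n \to I(X;Y\mid Z)$ almost surely. Because $0 \le \hat{I}_n \le \log\min\{|\mathcal{X}|,|\mathcal{Y}|\}$ uniformly in $n$, the squared deviation $(\hat{I}_n - I(X;Y\mid Z))^2$ is bounded by a fixed constant, so dominated convergence yields $\mathbb{E}[(\hat{I}_n - I(X;Y\mid Z))^2] \to 0$. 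Combined with the regret bound this closes the proof. The main obstacle is this plug-in $L_2$ consistency: rather than computing moments of $\hat{I}_n$ directly, one should couple the almost-sure convergence with uniform boundedness and appeal to dominated convergence; the handling of the symmetrizing $\max$ and the regret asymptotics is then routine given Theorem~\ref{th:fmonotone} and Lemma~\ref{lemma:scii}.
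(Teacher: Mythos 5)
Your proof is correct, and it follows the same basic decomposition the paper intends: split $\frac{1}{n}\SCI$ into the empirical conditional mutual information plus a regret difference, and show the regret contribution is $\mathcal{O}(\log n / n)$ via the Rissanen asymptotics for $\log \regret_k^m$. The paper, however, states Theorem~\ref{th:l2} merely as a corollary of Lemma~\ref{lemma:scii}, and the proof of that lemma only disposes of the regret terms; in Eq.~\eqref{eq:iscdetail} the entropy part of $\SCI$ is written as $n \cdot I(X;Y\mid Z)$ when it is really $n$ times the plug-in estimate $\hat{I}_n$, so the convergence of $\hat{I}_n$ to the true $I(X;Y\mid Z)$ in mean square is silently assumed. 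Your argument supplies exactly this missing step --- almost-sure convergence of the empirical cell probabilities, continuity of $\hat{I}_n$ as a function of them under the convention $0\log 0 = 0$, the uniform bound $0 \le \hat{I}_n \le \log\min\{|\mathcal{X}|,|\mathcal{Y}|\}$, and dominated convergence --- which is what actually upgrades the deterministic limit of Lemma~\ref{lemma:scii} to an $L_2$ statement about a random quantity. Your observation that the symmetrizing $\max$ in $\SCI_f$ acts only on the regret differences (because $\hat{I}_n$ is common to both orderings) is also a detail the paper does not spell out but that is needed for the decomposition to go through cleanly. In short: same skeleton as the paper, but your version closes a real gap in the published argument rather than taking a different route.
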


Next, we compare both of our tests to the findings of \cite{goebel:05:gamma}.

\subsection{Link to Gamma Distribution}
\label{sec:gamma}

\cite{goebel:05:gamma} estimate conditional mutual information through a second-order Taylor series and show that their estimator can be approximated with the gamma distribution. In particular, they state that  
\[
\hat{I}(X;Y \mid Z) \sim \Gamma \left( \frac{|\mathcal{Z}|}{2}(|\mathcal{X}|-1)(|\mathcal{Y}|-1), \frac{1}{n \ln 2} \right) \; ,
\]
where $\mathcal{X}$, $\mathcal{Y}$ and $\mathcal{Z}$ refer to the domains of $X$, $Y$ and $Z$. This means by selecting a significance threshold $\alpha$, we can derive a threshold for \CMI based on the gamma distribution---for convenience we call this threshold $t_{\Gamma}$. In the following, we compare $t_{\Gamma}$ against $t_{\SC} = \Penalty(X \mid Z,Y) - \Penalty(X \mid Z)$.

First of all, for qNML, like $t_{\Gamma}$, $t_{\SC}$  depends purely on the sample size and the domain sizes. However, we consider the difference in complexity between only conditioning $X$ on $Z$ and the complexity of conditioning $X$ on $Z$ and $Y$. For fNML, we have the additional aspect that the regret terms for both $\Penalty(X \mid Z)$ and $\Penalty(X \mid Z,Y)$ also relate to the probability mass functions of $Z$, and respectively the Cartesian product of $Z$ and $Y$. Recall that for $k$ being the size of the domain of $X$, we have that 
\[
\Penalty_f(X \mid Z) = \sum_{z \in Z} \log \regret_{k}^{|z|} \; .
\]
As $\regret_k^n$ is log-concave in $n$ (Lemma~\ref{lemma:log:concave}), $\Penalty_f(X \mid Z)$ is maximal if $Z$ is uniformly distributed---i.e. it is maximal when $H(Z)$ is maximal. This is a favourable property, as the probability that $Z$ is equal to $X$ is minimal for uniform $Z$, as stated in the following Lemma \citep{cover:06:elements}.

\begin{lemma}
If $X$ and $Y$ are i.i.d. with entropy $H(Y)$, then $P(Y = X) \ge 2^{-H(Y)}$ with equality if and only if $Y$ has a uniform distribution.
\end{lemma}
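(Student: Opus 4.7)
The plan is to reduce the statement to a direct application of Jensen's inequality applied to the concave function $\log$. Since $X$ and $Y$ are i.i.d.\ with common pmf $p$, the collision probability can be rewritten as an expectation:
\[
P(Y = X) = \sum_{x} p(x)^2 = \sum_{x} p(x) \cdot p(x) = \mathbb{E}_{X \sim p}[p(X)] \; .
\]
This is the key reformulation that lets information-theoretic identities enter.

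Next, I would apply Jensen's inequality. Because $\log$ is concave, $\log \mathbb{E}[Z] \ge \mathbb{E}[\log Z]$ for any positive random variable $Z$. Taking $Z = p(X)$ with $X \sim p$, this gives
\[
\log P(Y = X) \;=\; \log \mathbb{E}[p(X)] \;\ge\; \mathbb{E}[\log p(X)] \;=\; \sum_{x} p(x) \log p(x) \;=\; -H(Y) \; ,
\]
where the last equality is the definition of Shannon entropy (with $\log$ taken in base $2$ to match the bound $2^{-H(Y)}$). Exponentiating both sides yields $P(Y = X) \ge 2^{-H(Y)}$, as claimed.

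For the equality characterization, I would invoke the standard equality condition in Jensen's inequality: since $\log$ is strictly concave on $(0,\infty)$, equality holds if and only if $p(X)$ is almost surely constant, i.e.\ $p(x)$ takes the same value for every $x$ in the support of $X$. This is precisely the condition that $Y$ is uniformly distributed on its support. Both directions then follow: uniformity implies $p(x) = 1/|\mathcal{Y}|$ so that $\sum_{x} p(x)^2 = 1/|\mathcal{Y}| = 2^{-H(Y)}$, and conversely equality forces uniformity.

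There is no real obstacle here; the only subtlety worth flagging is to be careful about the support (so that $\log p(x)$ is well defined), which is handled by restricting the sum and the expectation to $\{x : p(x) > 0\}$, consistent with the usual convention $0 \log 0 = 0$ in the definition of $H(Y)$.
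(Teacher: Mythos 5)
Your proof is correct. The paper does not prove this lemma at all---it is quoted directly from Cover and Thomas's \emph{Elements of Information Theory} with a citation---and your argument (rewriting the collision probability as $\mathbb{E}[p(X)]$, applying Jensen's inequality to the concave $\log$, and using strict concavity for the equality case) is precisely the standard textbook proof of that result. The one caveat you already flag yourself is the right one: equality forces $p$ to be constant on its \emph{support}, so ``uniform distribution'' in the statement should be read as uniform on the support of $Y$.
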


\begin{figure}[t]%
	\begin{minipage}[t]{.5\linewidth}
	\centering
	\includegraphics[]{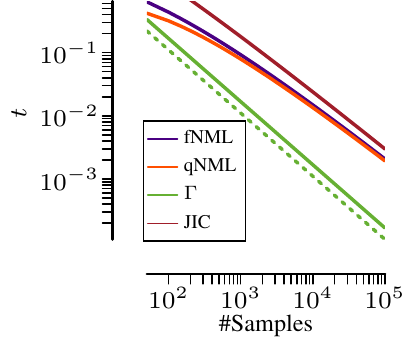}
	\end{minipage}%
	\begin{minipage}[t]{.5\linewidth}
	\centering
	\includegraphics[]{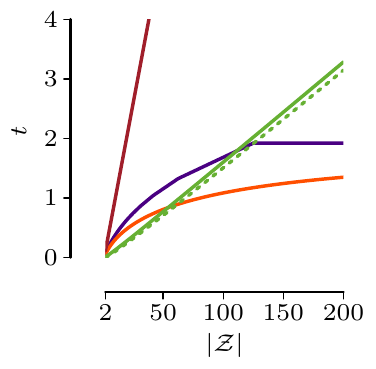}
	\end{minipage}%
	\caption{Threshold for $\CMI$ using fNML, qNML, \JIC and the gamma distribution with $\alpha=0.05$ (solid) and $\alpha=0.001$ (dashed) for different sample sizes and fixed domain sizes equal to four (left) and fixed sample size of $500$ and changing domain sizes (right).}
	\label{fig:thresholds}
\end{figure}

To elaborate the link between $t_{\Gamma}$ and $t_{\SC}$, we compare them empirically. In addition, we compare the results to the threshold provided from the \JIC test. First, we compare $t_{\Gamma}$ with $\alpha = 0.05$ and $\alpha = 0.001$ to $t_{\SC} / n$ for fNML and qNML, and \JIC on fixed domain sizes, with $|\mathcal{X}| = |\mathcal{Y}| = |\mathcal{Z}| = 4$ and varying the sample sizes (see Figure~\ref{fig:thresholds}). For fNML we computed the worst case threshold under the assumption that $Z$ is uniformly distributed. In general, the behaviour for each threshold is similar, whereas qNML, fNML and \JIC are more restrictive than $t_{\Gamma}$.

Next, we keep the sample size fix at $500$ and increase the domain sizes of $Z$ from $2$ to $200$, to simulate multiple variables in the conditioning set. Except to \JIC, which seems to overpenalize in this case, we observe that fNML is most restrictive until we reach a plateau when $|\mathcal{Z}| = 125$. This is due to the fact that $|\mathcal{Z}| |\mathcal{Y}| = 500$ and hence each data point is assigned to one value in the Cartesian product. We have that $\Penalty_f(X \mid Z, Y) = |\mathcal{Z}| |\mathcal{Y}| \regret_k^1$.

It is important to note, however, that the thresholds that we computed for fNML assume that $Z$ and $Y$ are uniformly distributed and $Y \independent Z$. In practice, when this requirement is not fulfilled, the regret term of fNML can be smaller than this value, since it is data dependent. In addition, it is possible that the number of distinct values that we observe from the joint distribution of $Z$ and $Y$ is smaller than their Cartesian product, which also reduces the difference in the regret terms for fNML.

\subsection{Empirical Sample Complexity}

In this section, we empirically evaluate the sample complexity of $\SCI_f$, where we focus on the type I error, i.e. $H_0 \colon X \independent Y \mid Z$ is true and hence $I(X;Y \mid Z) = 0$. We generate data accordingly and draw samples from the joint distribution, where we set $P(x,y,z) = \frac{1}{|\mathcal{X}||\mathcal{Y}||\mathcal{Z}|}$ for each value configuration $(x,y,z) \in \mathcal{X} \times \mathcal{Y} \times \mathcal{Z}$. Per sample size we draw $1 \, 000$ data sets and report the average absolute error for $\SCI_f$ and the empirical estimator of CMI, $\hat{I}$. We show the results for two cases in Fig.~\ref{fig:sample-complexity-lines}. We observe that in contrast to the empirical plug-in estimator $\hat{I}$, 
$\SCI_f$ quickly approaches zero, and that the difference is especially large for larger domain sizes.

\begin{figure}[t]%
	\begin{minipage}[t]{.5\linewidth}
	\centering
	\includegraphics[]{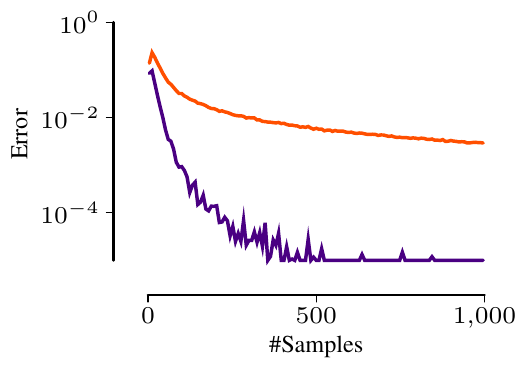}
	\end{minipage}%
	\begin{minipage}[t]{.5\linewidth}
	\centering
	\includegraphics[]{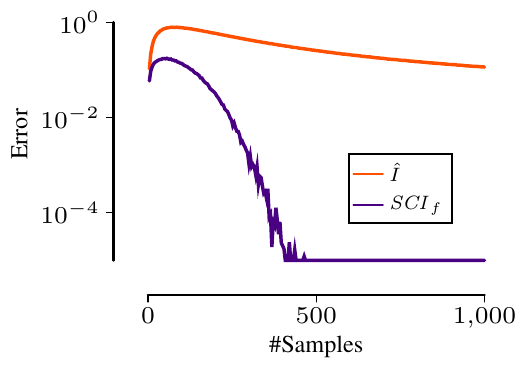}
	\end{minipage}%
	\caption{Error for $\SCI_f$ and $\hat{I}$ compared to $I$, where $I(X;Y | Z)\myeq0$. Left: $|\mathcal{X}|\myeq|\mathcal{Y}|\myeq2$ and $|\mathcal{Z}|\myeq4$. Right: $|\mathcal{X}|\myeq|\mathcal{Y}|\myeq4$ and $|\mathcal{Z}|\myeq16$. Values smaller than $10^{-5}$ are truncated to $10^{-5}$.} 
	\label{fig:sample-complexity-lines}
\end{figure}

In the supplemental material we give a more in depth analysis alltogether. Our evaluation suggest that the sample complexity is sub-linear. In particular, we find that the number of samples $n$ required such that $P(|\SCI_f^n(X; Y \mid Z) / n - I(X;Y \mid Z)| \ge \epsilon) \le \delta$, with $\epsilon = \delta = 0.05$ is smaller than $35 + 2 |\mathcal{X}| |\mathcal{Y}|^{2/3}(|\mathcal{Z}|+1)$.

To illustrate this, consider the left example in Figure~\ref{fig:sample-complexity-lines} again. We observe that for $\epsilon = \delta = 0.05$, $n$ needs to be at least $52$, which is smaller than the value from our empirical bound function, that is equal to $67$. If we require $\epsilon = 0.01$ and $\delta = 0.05$, we observe that $n$ must be at least $72$. In comparison, for $\hat{I}$, $n$ needs to be at least $140$ for $\epsilon = 0.05$ and $684$ for $\epsilon = 0.01$.

\subsection{Discussion}

The main idea for our independence test is to approximate conditional mutual information through algorithmic conditional independence. In particular, we estimate conditional entropy with stochastic complexity. We recommend $\SCI_f$, since the regret for the entropy term does not only depend on the sample size and the domain sizes of the corresponding random variables, but also on the probability mass function of the  conditioning variables. In particular, when fixing the domain sizes and the sample size, higher thresholds are assigned to conditioning variables that are unlikely to be equal to the target variable. 

By assuming a uniform distribution for the conditioning variables and hence eliminating this data dependence from $\SCI_f$, it behaves similar to $\SCI_q$ and \CMI where the threshold is derived from the gamma distribution~\citep{goebel:05:gamma}. $\SCI_f$ is more restrictive and the penalty terms of all three decrease exponentially w.r.t. the sample size.

$\SCI$ can also be extended for sparsification, as is possible to derive an analytical p-value for the significance of a decision using the no-hypercompression inequality~\citep{grunwald:07:book,marx:17:slope}. 

Last, note that as we here instantiate \SCI using stochastic complexity for multinomials, we implicitly assume that the data follows a multinomial distribution. In this light, it is important to note that stochastic complexity is a mini-max optimal refined MDL code~\citep{grunwald:07:book}. This means that for any data, we obtain a score that is within a constant term from the best score attainable given our model class. The experiments verify that indeed, \SCI performs very well, even when the data is sampled adversarially.

\section{Experiments}
\label{sec:experiments}

In this section, we empirically evaluate \SCI based on fNML and compare it to the alternative formulation using qNML. In addition, we compare it to the $G^2$ test from the \textit{pcalg} R package~\citep{kalisch:12:pcalg}, $\CMI_{\Gamma}$~\citep{goebel:05:gamma} and \JIC~\citep{suzuki:16:jic}.

\subsection{Identifying d-Separation}
\label{sec:dsep:test}

To test whether \SCI can reliably distinguish between independence and dependence, we generate data as depicted in Figure~\ref{fig:d_separation}, where we draw $F$ from a uniform distribution and model a dependency from $X$ to $Y$ by simply assigning uniformly at random each $x \in \mathcal{X}$ to a $y \in \mathcal{Y}$. We set the domain size for each variable to $4$ and generate data under various samples sizes ($100$--$2 \, 500$) and additive uniform noise settings ($0 \%$--$95 \%$). For each setup we generate $200$ data sets and assess the accuracy. In particular, we report the correct identifications of $F \independent T \mid D,E$ as the true positive rate and the false identifications $D \independent T \mid E,F$ or $E \independent T \mid D,F $ as false positive rate.\!\footnote{For $0\%$ noise, $F$ has all information about $D$ and $E$. Hence, in this specific case, $D \not \independent T \mid E,F$ and $E \not \independent T \mid D,F$ does not hold.} For the $G^2$ test and $\CMI_{\Gamma}$ we select $\alpha = 0.05$, however, we found no significant differences for $\alpha = 0.01$.

In the interest of space we only plot the accuracy of the best performing competitors in Figure~\ref{fig:indep_comp} and report the remaining results as well as the true and false positive rates for each approach in the supplemental material. Overall, we observe that $\SCI_f$ performs near perfect for less than $70 \%$ additive noise. When adding $70 \%$ or more noise, the type II error increases. Those results are even better than expected as from our empirical bound function we would suggest that at least $378$ samples are required to have reliable results for this data set. $\SCI_q$ has a similar but slightly worse performance. In contrast, $\CMI_{\Gamma}$ only performs well for less than $30 \%$ noise and fails to identify true independencies after more than $30 \%$ noise has been added, which leads to a high type I error. The $G^2$ test has problems with sample sizes up to $500$ and performs inconsistently given more than $35\%$ noise. Note that we forced $G^2$ to decide for every sample size, while the minimum number of samples recommended for $G^2$ on this data set would be $1 \, 440$, which corresponds to $10(|\mathcal{X}|-1)(|\mathcal{Y}|-1)(|\mathcal{Z}|)$.

\begin{figure}[t]%
	\begin{minipage}[t]{.5\linewidth}
	\centering
	\includegraphics[]{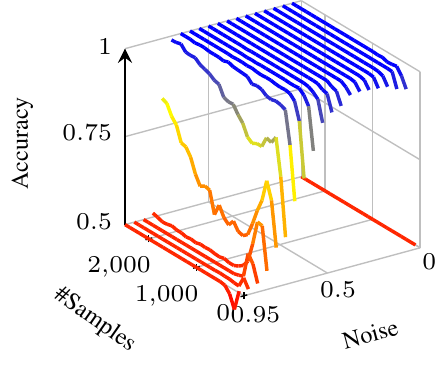}
	\subcaption{$\SCI_f$}
	\label{fig:dsep:scif}
	\end{minipage}%
	\begin{minipage}[t]{.5\linewidth}
	\centering
	\includegraphics[]{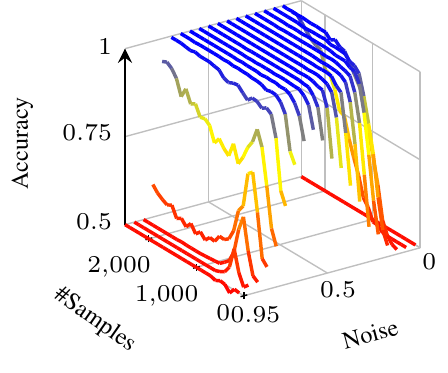}
	\subcaption{$\SCI_q$}
	\label{fig:dsep:sciq}
	\end{minipage}%
	\newline
	\begin{minipage}[t]{.5\linewidth}
	\centering
	\includegraphics[]{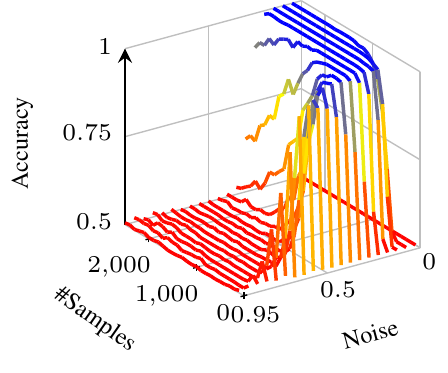}
	\subcaption{$\CMI_{\Gamma}$}
	\label{fig:dsep:gamma}
	\end{minipage}%
	\begin{minipage}[t]{.5\linewidth}
	\centering
	\includegraphics[]{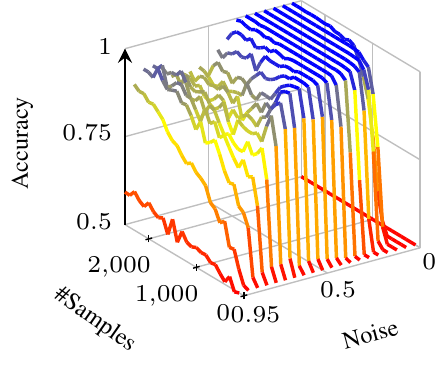}
	\subcaption{$G^2$}
	\label{fig:dsep:g2}
	\end{minipage}%
	\caption{[Higher is better] Accuracy of $\SCI_f$, $\SCI_q$, $\CMI_{\Gamma}$ and $G^2$ for identifying $d$-separation using varying samples sizes and additive noise percentages, where a noise level of $0.95$ refers to $95 \%$ additive noise.} 
	\label{fig:indep_comp}
\end{figure}

\subsection{Changing the Domain Size}
\label{ch:domain-sizes}

Using the same data generator as above, we now consider a different setup. We fix the sample size to $2 \; 000$ and use only $10 \%$ additive noise---a setup where all tests performed well. What we change is the domain size of the source $F$ from $2$ to $20$ while also restricting the domain sizes of the remaining variable to the same size. For each setup we generate $200$ data sets.

From the results in Figure~\ref{fig:domain_sizes_dsep} we can clearly see that only $\SCI_f$ is able to deal with larger domain sizes as for all other test, the false positive rate is at $100\%$ for larger domain sizes, resulting in an accuracy of $50 \%$.

\begin{figure}[t]%
	\centering
	\includegraphics[]{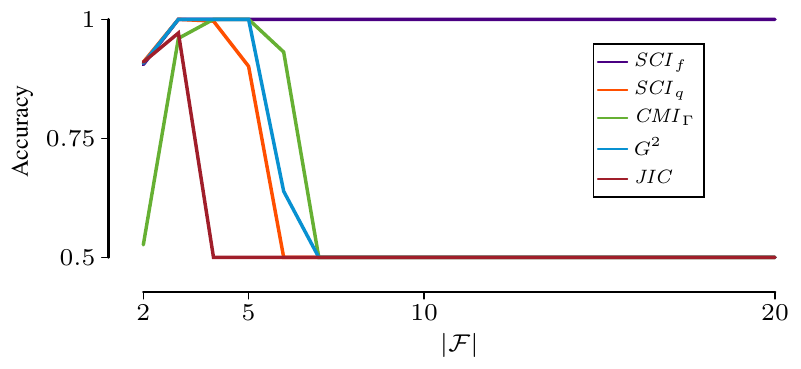}
	\caption{[Higher is better] d-Separation with $2 \, 000$ samples and $10 \%$ noise on different domain sizes of the source node $F$.}
	\label{fig:domain_sizes_dsep}
\end{figure}

\subsection{Plug and Play with SCI}

Last, we want to show how \SCI performs in practice. To do this, we run the stable PC algorithm~\citep{kalisch:12:pcalg,colombo:14:stablepc} on the \textit{Alarm} network~\citep{scutari:14:bnlearn} from which we generate data with different sample sizes and average over the results of $10$ runs for each sample size. We equip the stable PC algorithm with $\SCI_f$, $\SCI_q$, \JIC, $\CMI_{\Gamma}$ and the default, the $G^2$ test, and plot the average $F1$ score over the undirected graphs in Figure~\ref{fig:pc_small}. We observe that our proposed test, $\SCI_f$ outperforms the other tests for each sample size with a large margin and especially for small sample sizes.

\begin{figure}[h]
	\centering
	\includegraphics[]{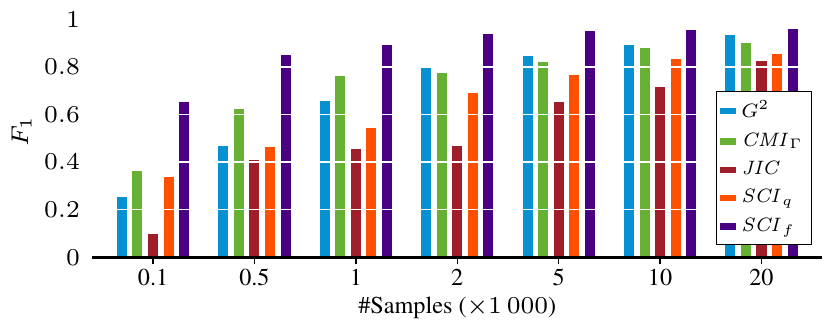}
	\caption{[Higher is better] $F_1$ score on undirected edges for stable PC using $\SCI_f$, $\SCI_q$, \JIC, $\CMI_{\Gamma}$ and $G^2$ on the \textit{Alarm} network for different sample sizes}
	\label{fig:pc_small}
\end{figure}

As a second practical test, we compute the Markov blanket for each node in the \textit{Alarm} network and report the precision and recall. To find the Markov blankets, we run the PCMB algorithm~\citep{pena:07:pcmb} with the four independence tests. We plot the precision and recall for each variant in Figure~\ref{fig:mb_comparison}. We observe that again $\SCI_f$ performs best---especially with regard to recall. As for Markov blankets of size $k$ it is necessary to condition on at least $k-1$ variables, this advantage in recall can be linked back to $\SCI_f$ being able to correctly detect dependencies for larger domain sizes.

\begin{figure}[t]%
	\begin{minipage}[t]{.5\linewidth}
	\centering
	\includegraphics[]{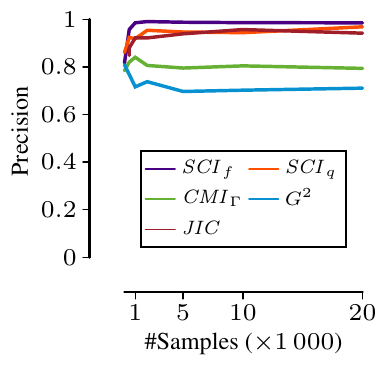}
	\end{minipage}%
	\begin{minipage}[t]{.5\linewidth}
	\centering
	\includegraphics[]{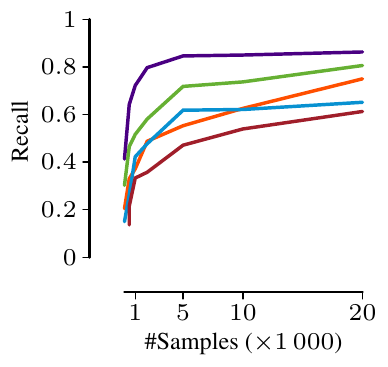}
	\end{minipage}%
	\caption{[Higher is better] Precision (left) and recall (right) for $\pcmb$ using $\SCI_f$, $\SCI_q$, \JIC, $\CMI_{\Gamma}$ and $G^2$ to identify all Markov blankets in the \textit{Alarm} network for different sample sizes.}
	\label{fig:mb_comparison}
\end{figure}

\section{Conclusion}
\label{sec:conclusion}

In this paper we introduced \SCI, a new conditional independence test for discrete data. We derive \SCI from algorithmic conditional independence and show that it is an unbiased asymptotic estimator for conditional mutual information (\CMI). Further, we show how to use \SCI to find a threshold for \CMI and compare it to thresholds drawn from the gamma distribution. 

In particular, we propose to instantiate \SCI using fNML as in contrast to using qNML or thresholds drawn from the gamma distribution, fNML does not only make use of the sample size and domain sizes of the involved variables, but also utilizes the empirical probability mass function of the conditioning variable. Moreover, we observe that $\SCI_f$ clearly outperforms its competitors on both synthetic and real world data. Last but not least, our empirical evaluations suggest that \SCI has a sub-linear sample complexity, which we would like to theoretically validate in future work.

\subsubsection*{Acknowledgements}
The authors would like to thank David Kaltenpoth for insightful discussions. 
Alexander Marx is supported by the International Max Planck Research School for Computer Science (IMPRS-CS). Both authors are
supported by the Cluster of Excellence on ``Multimodal Computing and Interaction'' within the Excellence Initiative of the German Federal Government.

\bibliography{bib/abbreviations,bib/bib-jilles,bib/bib-paper,bib/bib-alex}

\appendix
\label{sec:appendix}
\section{Extended Theory}
\label{app:concave}

\subsection{Proof of Lemma~\ref{lemma:log:concave}}
\begin{proof}
To improve the readability of this proof, we write $\regret_L^n$ as shorthand for $\regret_{\models_L}^n$ of a random variable with a domain size of $L$.

Since $n$ is an integer, each $\regret_L^n > 0$ and $\regret_L^0 = 1$, we can prove Lemma~\ref{lemma:log:concave}, by showing that the fraction $\regret_L^n / \regret_L^{n-1}$ is decreasing for $n \ge 1$, when $n$ increases.

We know from \cite{mononen:08:sub-lin-stoch-comp} that $\regret_L^n$ can be written as the sum 
\begin{equation}
\label{eq:regretlin}
\regret_L^n = \sum_{k=0}^n m(k,n) = \sum_{k=0}^n \frac{n\fallingfactorial{k}(L-1)\risingfactorial{k}}{n^kk!}  \; ,
\end{equation}
where $x\fallingfactorial{k}$ represent falling factorials and $x\risingfactorial{k}$ rising factorials. Further, they show that for fixed $n$ we can write $m(k,n)$ as
\begin{equation}
\label{eq:regret:rekursion:k}
m(k,n) = m(k-1,n) \frac{(n-k+1)(k+L-2)}{nk} \; ,
\end{equation}
where $m(0,n)$ is equal to $1$. It is easy to see that from $n=1$ to $n=2$ the fraction $\regret_L^n / \regret_L^{n-1}$ decreases, as $\regret_L^0 = 1$, $\regret_L^1 = L$ and $\regret_L^2 = L + L(L-1)/2$. In the following, we will show the general case. We rewrite the fraction as follows.
\begin{align}
\frac{\regret_L^n}{\regret_L^{n-1}} &= \frac{\sum_{k=0}^n m(k,n)}{\sum_{k=0}^{n-1} m(k,n-1)} \\
&= \frac{\sum_{k=0}^{n-1} m(k,n)}{\sum_{k=0}^{n-1} m(k,n-1)} + \frac{m(n,n)}{ \sum_{k=0}^{n-1} m(k,n-1)} \label{eq:begin:proof}
\end{align}
Next, we will show that both parts of the sum in Eq.~\refeq{eq:begin:proof} are decreasing when $n$ increases. We start with the left part, which we rewrite to
\begin{align}
\frac{\sum_{k=0}^{n-1} m(k,n)}{\sum_{k=0}^{n-1} m(k,n-1)} &= \frac{\sum_{k=0}^{n-1} m(k,n-1) + \sum_{k=0}^{n-1} \left( m(k,n)- m(k,n-1) \right) }{\sum_{k=0}^{n-1} m(k,n-1)} \\
&= 1+ \frac{\sum_{k=0}^{n-1} \frac{(L-1)\risingfactorial{k}}{k!} \left( \frac{n\fallingfactorial{k}}{n^k} - \frac{(n-1)\fallingfactorial{k}}{(n-1)^k} \right)}{\sum_{k=0}^{n-1} m(k,n-1)} \; . \label{eq:step2}
\end{align}
When $n$ increases, each term of the sum in the numerator in Eq.~\refeq{eq:step2} decreases, while each element of the sum in the denominator increases. Hence, the whole term is decreasing. In the next step, we show that the right term in Eq.~\refeq{eq:begin:proof} also decreases when $n$ increases. It holds that
\[
\frac{m(n,n)}{ \sum_{k=0}^{n-1} m(k,n-1)} \ge \frac{m(n,n)}{m(n-1,n-1)} \; .
\]
Using Eq.~\refeq{eq:regret:rekursion:k} we can reformulate the term as follows.
\begin{equation}
\frac{\frac{n+L-2}{n^2} m(n-1,n)}{m(n-1,n-1)} = \frac{n+L-2}{n^2} \left( 1 + \frac{m(n-1,n) - m(n-1,n-1)}{m(n-1,n-1)} \right)
\end{equation}
After rewriting, we have that $\frac{n+L-2}{n^2}$ is definitely decreasing with increasing $n$. For the right part of the product, we can argue the same way as for Eq.~\refeq{eq:step2}. Hence the whole term is decreasing, which concludes the proof.
\end{proof}

\subsection{Quotient SCI}

Conditional stochastic complexity can also be defined via quotient normalized maximum likelihood (qNML), which is defined as follows
\begin{equation}
\SC_q(x^n \mid y^n) = \sum_{v \in \mathcal{Y}} |v| \hat{H}(x^n \mid y^n \!= \!v) + \log \frac{\regret_{|\mathcal{X}| \cdot |\mathcal{Y}|}^n}{\regret_{|\mathcal{Y}|}^n} \; .
\end{equation}
We refer to the regret term of $\SC_q(X \mid Z)$ with
\[
\Penalty_q(X \mid Z) = \log \frac{\regret_{|\mathcal{X}| \cdot |\mathcal{Z}|}^n}{\regret_{|\mathcal{Z}|}^n} \; .
\]
Analogously to Theorem~\ref{th:fmonotone} for fNML, we can define the following theorem for qNML.
\begin{theorem}
\label{th:qmonotone}
Given three random variables $X$, $Y$ and $Z$, it holds that $\Penalty_q(X \mid Z) \le \Penalty_q(X \mid Z,Y)$.
\end{theorem}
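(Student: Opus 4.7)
The plan is to unfold the definition of $\Penalty_q$ and reduce the claim to a single multiplicative inequality among four regret terms, all at the same sample size $n$. With $a := |\mathcal{X}|$, $L := |\mathcal{Z}|$, $c := |\mathcal{Y}|$, the target is
\[
\regret_L^n \cdot \regret_{aLc}^n \;\ge\; \regret_{aL}^n \cdot \regret_{Lc}^n .
\]
The four domain sizes $L \le aL,\, Lc \le aLc$ satisfy $L \cdot aLc = aL \cdot Lc$, and in log-scale the pair $(\log aL, \log Lc)$ lies between $(\log L, \log aLc)$ with the same midpoint. The inequality is therefore exactly the four-point convex-majorization condition for $\Phi(t) := \log \regret_{e^t}^n$ applied at the collinear points $t_1 = \log L,\, t_3 = \log aL,\, t_4 = \log Lc,\, t_2 = \log aLc$, so it suffices to prove that $\Phi$ is convex in $t$.

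For this I would establish the intermediate lemma that $\regret_L^n$, viewed as a polynomial in $L$, has non-negative coefficients. Grouping the defining sum by the composition of cell counts yields the identity
\[
\regret_L^n \;=\; \frac{n!}{n^n} \cdot [z^n]\, B(z)^L , \qquad B(z) \;:=\; \sum_{m \ge 0} \frac{m^m}{m!}\, z^m .
\]
A classical identity expresses $B$ through the tree function $T(z) = \sum_{m \ge 1} m^{m-1} z^m/m!$ (which satisfies $T(z) = z\, e^{T(z)}$), namely $B(z) = 1/(1 - T(z))$. Hence $\log B(z) = \sum_{j \ge 1} T(z)^j/j$ has non-negative power series coefficients, and so does every power $(\log B(z))^k$. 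Expanding $B(z)^L = \exp(L \log B(z)) = \sum_{k \ge 0} L^k (\log B)^k / k!$ and extracting $[z^n]$ then exhibits $\regret_L^n = \sum_{k=0}^{n} c_{n,k}\, L^k$ with $c_{n,k} \ge 0$.

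With the lemma in hand, $\regret_{e^t}^n = \sum_k c_{n,k}\, e^{kt}$ is a non-negative combination of exponentials in $t$, so $\Phi(t)$ is convex in $t$ by log-sum-exp convexity composed with affine functions. The convex-majorization inequality then gives $\Phi(t_1) + \Phi(t_2) \ge \Phi(t_3) + \Phi(t_4)$, which unpacks to exactly $\Penalty_q(X \mid Z) \le \Penalty_q(X \mid Z, Y)$. The hard part will be the non-negative-coefficient lemma: the generating-function manipulation is compact but relies on the tree-function identity $B = 1/(1-T)$, which is classical yet not fully elementary. An alternative route via the super-cell decomposition $\regret_{aL}^n / \regret_L^n = \mathbb{E}_L[\prod_z \regret_a^{M_z}]$ (expectation over $L$-compositions of $n$ weighted by NML) combined with the pointwise bound $\regret_{ac}^m \ge \regret_a^m \regret_c^m$ (obtained by restricting multinomial maximum likelihood to product form) and a Chebyshev-type positive-correlation step also works in principle, but the correlation step appears harder to justify cleanly than the tree-function route.
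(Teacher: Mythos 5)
Your proposal is correct, but it takes a genuinely different route from the paper. The paper also reduces the claim to the single inequality $\regret_{kq}^n/\regret_{q}^n \le \regret_{kpq}^n/\regret_{pq}^n$, but then simply cites \cite{silander:18:qnml} for the key fact that $q \mapsto \regret_{pq}^n/\regret_q^n$ is increasing in $q$; there is no further argument. You instead prove a stronger, self-contained structural fact: that $\Phi(t) = \log \regret_{e^t}^n$ is convex, i.e.\ that $\log \regret_L^n$ is convex in $\log L$, from which the paper's cited monotonicity follows immediately (convexity of $\Phi$ means the increment $\Phi(t + \log a) - \Phi(t)$ is non-decreasing in $t$, which applied at $t = \log L$ versus $t = \log(Lc)$ is exactly the target inequality --- you do not even need the full Karamata/majorization phrasing). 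Your supporting lemma is sound: the identity $\regret_L^n = \frac{n!}{n^n}[z^n]B(z)^L$ with $B(z) = \sum_m \frac{m^m}{m!}z^m$ follows from grouping the defining sum by compositions, the tree-function identity $B = 1/(1-T)$ is the classical EGF statement that functional graphs are sets of cycles of rooted trees, and non-negativity of the coefficients of $(\log B)^k$ gives $\regret_L^n = \sum_k c_{n,k}L^k$ with $c_{n,k} \ge 0$ (indeed $n^n c_{n,k}$ counts functions on $[n]$ whose functional graph has exactly $k$ components), after which log-sum-exp convexity closes the argument. What each approach buys: the paper's proof is two lines but rests entirely on an external result; yours is longer and leans on a classical but non-elementary generating-function identity, yet it is self-contained and establishes the reusable log-log-convexity of the regret in the alphabet size, which is strictly more than what Theorem~\ref{th:qmonotone} needs. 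The alternative correlation-based route you sketch at the end is indeed the harder one to make rigorous; the tree-function route is the right choice.
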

\begin{proof}
Consider $n$ samples of three random variables $X$, $Y$ and $Z$, with corresponding domain sizes $k$, $p$ and $q$. It should hold that
\begin{align}
\Penalty_q(X \mid Z) &\le \Penalty_q(X \mid Z,Y) \\
\Leftrightarrow \log \frac{\regret_{kq}^n}{\regret_{q}^n} & \le \log \frac{\regret_{kpq}^n}{\regret_{pq}^n} \; .
\end{align}
We know from \cite{silander:18:qnml} that for $p \in \mathbb{N}, p \ge 2$, the function $q \mapsto \frac{\regret_{p \cdot q}^n}{\regret_q^n}$ is increasing for every $q \ge 2$. This suffices to proof the statement above.
\end{proof}

To formulate \SCI using quotient normalized maximum likelihood, we can straightforwardly replace $\SC$ with $\SC_q$ in the independence criterium---i.e.
\begin{equation}
\SCI_q(X;Y \mid Z) := \SC_q(X \mid Z) - \SC_q(X \mid Z,Y) 
\end{equation}
and say that $X \independent Y \mid Z$, if $\SCI_q(X;Y \mid Z) \le 0$. By writing down the regret terms for $\SCI_q(X;Y \mid Z)$ and $\SCI_q(Y;X \mid Z)$, we can see that they are equal and hence $\SCI_q$ is symmetric, that is, $\SCI_q(X;Y \mid Z) = \SCI_q(Y;X \mid Z)$.

Since we showed that Theorem~\ref{th:qmonotone} holds for qNML, Theorems~\ref{th:sci_indep}-\ref{th:l2} can also be proven for qNML using the same arguments as for fNML.

\subsection{Alternative Symmetry Correction for Factorized SCI}

To instantiate \SCI using fNML, we take the maximum between $I_{\SC}^f(X;Y \mid Z)$ and $I_{\SC}^f(Y;X \mid Z)$ to achieve symmetry. We could also achieve symmetry when we base our formulation on an alternative formulation of conditional mutual information, that is
\begin{equation}
\CMI(X;Y \mid Z) = H(X \mid Z) + H(Y \mid Z) - H(X,Y \mid Z) \; . \label{eq:cmialternative}
\end{equation}
In particular, we formulate our alternative test by replacing the conditional entropies in Eq.~\refeq{eq:cmialternative} with stochastic complexity based on fNML
\begin{equation}
\SCI_{\textit{fs}}(X;Y \mid Z) = \SC_f(X \mid Z) + \SC_f(Y \mid Z) - \SC_f(X,Y \mid Z) \; .
\end{equation}
By writing down the regret terms, we see that $\SCI_{\textit{fs}}(X;Y \mid Z) = \SCI_{\textit{fs}}(Y;X \mid Z)$. In particular, if we only consider the regret terms, we get
\begin{equation}
\sum_{z \in Z} \left( \regret_{|\mathcal{X}|}^{|z|} + \regret_{|\mathcal{Y}|}^{|z|} - \regret_{|\mathcal{X}||\mathcal{Y}|}^{|z|}  \right) \; . \label{eq:scifs}
\end{equation}
From Eq.~\refeq{eq:scifs} we see that all regret terms depend on the factorization given $Z$. For $I_{\SC}^f(X;Y \mid Z)$, however, we compare the factorizations of $X$ given only $Z$ to the one given $Z$ and $Y$, and similarly so for $I_{\SC}^f(Y;X \mid Z)$. In addition, for $\SCI_f$ all regret terms correspond to the same domain, either to the domain of $X$ or $Y$, whereas for $\SCI_{\textit{fs}}$ the regret terms are based on $X$, $Y$ and the Cartesian product of them. Since the last regret term of $\SCI_{\textit{fs}}$ is based on the Cartesian product of $X$ and $Y$ it performs worse than $\SCI_f$ for large domain sizes. This can also be seen in Figure~\ref{fig:domain-sizes-dsep-fs}, for which we conducted the same experiment as in Section~\ref{ch:domain-sizes}, but also applied $\SCI_{\textit{fs}}$. $\SCI_q$ exhibits similar behaviour like $\SCI_{\textit{fs}}$, as it also considers products of domain sizes.

There also exists a third way to formulate \CMI, i.e.
\begin{equation}
\CMI(X;Y \mid Z) = H(X,Z) + H(Y,Z) - H(X,Y,Z) - H(Z) \; . \label{eq:alternative2}
\end{equation}
When we replace all entropy terms with the stochastic complexity in Eq.~\refeq{eq:alternative2}, we would get an equivalent formulation to $\SCI_q$, as the regret terms would sum up to exactly the same values. Hence, we do not elaborate further on this alternative.

\begin{figure}[t]%
\centering
	\includegraphics[]{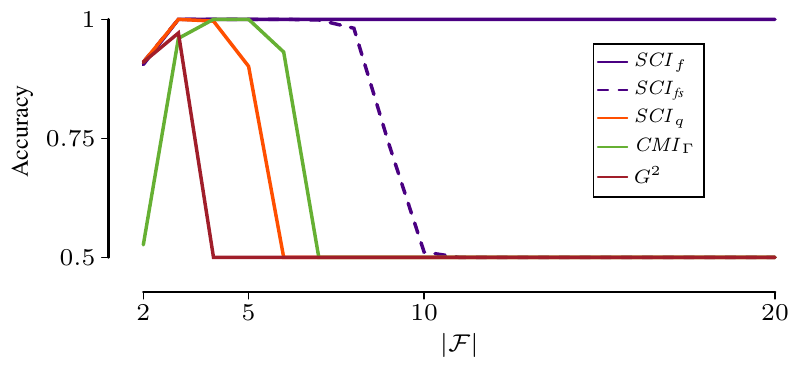}
	\caption{[Higher is better] d-Separation with $2 \, 000$ samples and $10 \%$ noise on different domain sizes of the source node $F$.}
	\label{fig:domain-sizes-dsep-fs}
\end{figure}

\section{Experiments}
\label{app:experiments}

In this section, we provide more details to the true positive and false positive rates w.r.t. d-separation. Further, we show how well \SCI and its competitors can recover multiple parents with and without additional noise variables in the conditioning set.

\subsection{TPR and FPR for d-Separation}

In Section~\ref{sec:dsep:test} we analyzed the accuracy of $\SCI_f$, $\SCI_q$, $\CMI_{\Gamma}$ and  $G^2$ for identifying d-separation. In Figure~\ref{fig:indep_comp:tfpr}, we plot the true and false positive rates to the corresponding experiment. In addition, we also provide the results for $\SCI_{\textit{fs}}$ and $\CMI_{\Gamma}$ with $\alpha=0.001$. Since we did not provide the accuracy of \JIC for this experiment in the main body of the paper, we plot the accuracy, true and false positive rates of \JIC in Figure~\ref{fig:jic_details} and analyze those results at the end of this section.

From Figure~\ref{fig:indep_comp:tfpr}, we see that $\SCI_f$ and $\SCI_{\textit{fs}}$ perform best. Only for very high noise setups ($\ge 70 \%$) they start to flag everything as independent. The $G^2$ test struggles with small sample sizes. It needs more than $500$ and is inconsistent given more than $35 \%$ noise.  Note that we forced $G^2$ to decide for every sample size, while the minimum number of samples recommended for $G^2$ on this data set would be $1 \, 440$, which corresponds to $10(|\mathcal{X}|-1)(|\mathcal{Y}|-1)|\mathcal{Z}|$~\citep{kalisch:12:pcalg}. Further, we observe that there is barely any difference between $\CMI_{\Gamma}$ using $\alpha = 0.05$ or $\alpha=0.001$ as a significance level. After more than $20 \%$ noise has been added, $\CMI_{\Gamma}$ starts to flag everything as dependent.

Next, we also show the accuracy for identifying d-separation for $\CMI$ with zero as threshold in Figure~\ref{fig:cmi_details}. Overall, it performs very poorly, which raises from the fact that it barely finds any independence. In addition to the accuracy of $\CMI$, we also plot the average value that $\CMI$ reports for the true positive case ($F \independent T \mid D,E$), where it should be equal to zero. It can be seen that it is dependent on the noise level as well as the sample size. This could explain, why $\SCI_f$ performs best on the d-separation data. Since the noise is uniform, the threshold for $\SCI_f$ is likely to be higher the more noise has been added.

The \JIC test has the opposite problem. For the d-separation scenario that we picked it is too restrictive and falsely detects independencies where the ground truth is dependent, as shown in Figure~\ref{fig:jic_details}. As the discrete version of \JIC is calculated from the empirical entropies and a penalizing term based on the asymptotic formulation of stochastic complexity---i.e.
\[
\JIC(X;Y \mid Z) := \max \{ \hat{I}(X; Y \mid Z) - \frac{(| \mathcal{X} | - 1)(| \mathcal{Y}| - 1) |\mathcal{Z}|}{2n} \log n, 0 \} \, ,
\]
it penalizes quite strongly in our example since $|\mathcal{Z}| = 16$. As \JIC is based on an asymptotic formulation of stochastic complexity, we expect it to perform better given more data.

\begin{figure}[t]%
	\begin{minipage}[t]{.5\linewidth}
	\centering
	\begin{minipage}[t]{.5\linewidth}
	\includegraphics[]{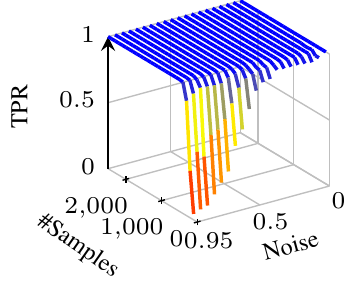}
	\end{minipage}%
	\begin{minipage}[t]{.5\linewidth}
	\includegraphics[]{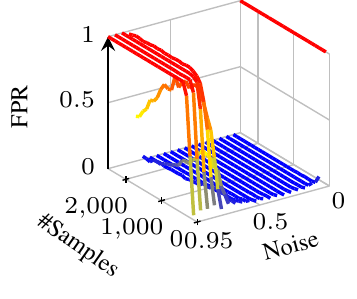}
	\end{minipage}%
	\subcaption{$\SCI_f$}
	\label{fig:tprfpr:scif}
	\end{minipage}
	\begin{minipage}[t]{.5\linewidth}
	\centering
	\begin{minipage}[t]{.5\linewidth}
	\includegraphics[]{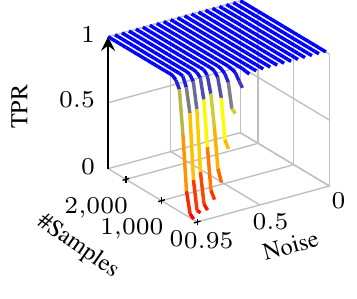}
	\end{minipage}%
	\begin{minipage}[t]{.5\linewidth}
	\includegraphics[]{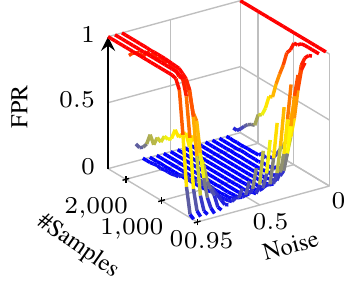}
	\end{minipage}
	\subcaption{$\SCI_q$}
	\label{fig:tprfpr:sciq}
	\end{minipage}%
	\newline
	\begin{minipage}[t]{.5\linewidth}
	\centering
	\begin{minipage}[t]{.5\linewidth}
	\includegraphics[]{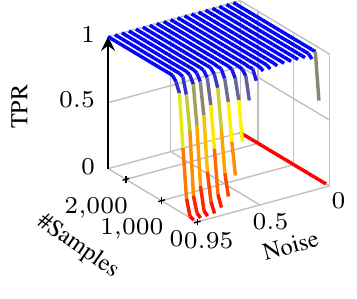}
	\end{minipage}%
	\begin{minipage}[t]{.5\linewidth}
	\includegraphics[]{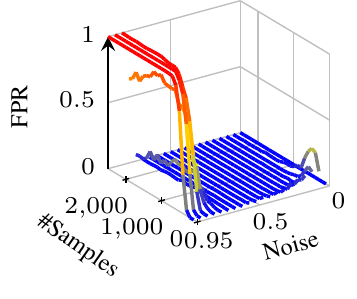}
	\end{minipage}%
	\subcaption{$\SCI_{\textit{fs}}$}
	\label{fig:tprfpr:scifs}
	\end{minipage}%
	\begin{minipage}[t]{.5\linewidth}
	\centering
	\begin{minipage}[t]{.5\linewidth}
	\includegraphics[]{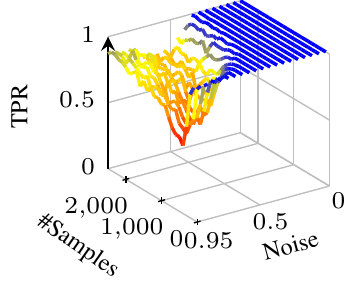}
	\end{minipage}%
	\begin{minipage}[t]{.5\linewidth}
	\includegraphics[]{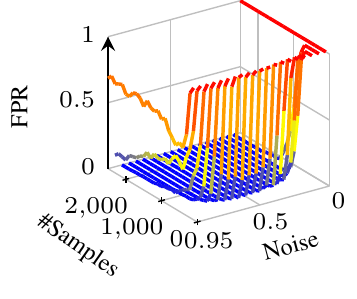}
	\end{minipage}%
	\subcaption{$G^2$}
	\label{fig:tprfpr:g2}
	\end{minipage}%
	\newline
	\begin{minipage}[t]{.5\linewidth}
	\centering
	\begin{minipage}[t]{.5\linewidth}
	\includegraphics[]{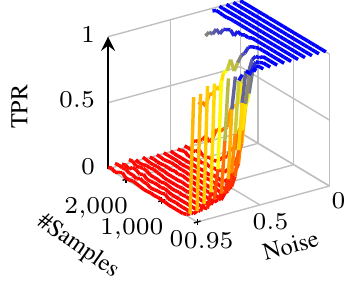}
	\end{minipage}%
	\begin{minipage}[t]{.5\linewidth}
	\includegraphics[]{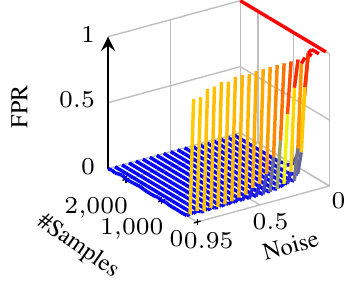}
	\end{minipage}%
	\subcaption{$\Gamma_{.05}$}
	\label{fig:tprfpr:gamma05}
	\end{minipage}%
	\begin{minipage}[t]{.5\linewidth}
	\centering
	\begin{minipage}[t]{.5\linewidth}
	\includegraphics[]{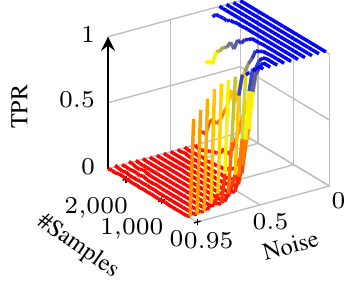}
	\end{minipage}%
	\begin{minipage}[t]{.5\linewidth}
	\includegraphics[]{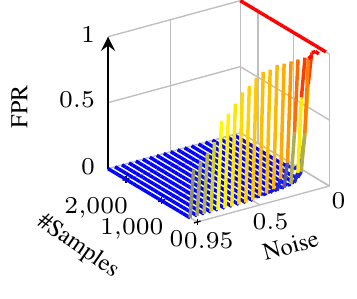}
	\end{minipage}%
	\subcaption{$\Gamma_{.001}$}
	\label{fig:tprfpr:gamma001}
	\end{minipage}%
	\caption{True positive (TPR) and false positive rates (FPR) of $\SCI_f$, $\SCI_q$, $\SCI_{\textit{fs}}$,  $G^2$ and $\CMI_{\Gamma}$ with $\alpha=0.05$ ($\Gamma_{.05}$) and $\alpha = 0.001$ ($\Gamma_{.001}$) for identifying d-separation. We use varying samples sizes and additive noise percentages, where a noise level of $0.95$ refers to $95 \%$ additive noise.} 
	\label{fig:indep_comp:tfpr}
\end{figure}

\begin{figure}[h]%
	\begin{minipage}[t]{.5\linewidth}
	\centering
	\includegraphics[]{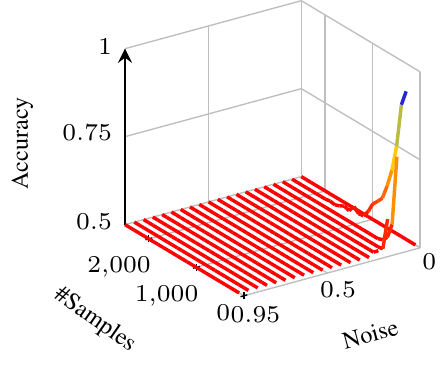}
	\end{minipage}%
	\begin{minipage}[t]{.5\linewidth}
	\centering
	\includegraphics[]{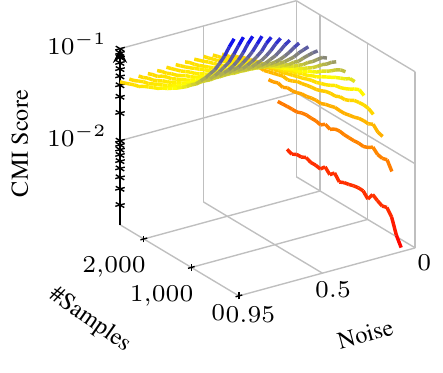}
	\end{minipage}%
	\caption{Accuracy of \CMI (left) and the average value returned by \CMI for the true independent case (right) for varying samples sizes and additive noise percentages. $I(F;T \mid D,E)$ is larger for small sample sizes.} 
	\label{fig:cmi_details}
\end{figure}

\begin{figure}[h]%
	\begin{minipage}[t]{.33\linewidth}
	\centering
	\includegraphics[]{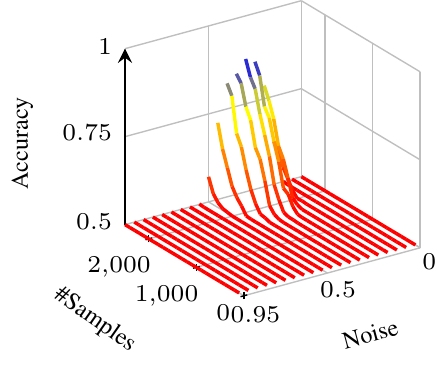}
	\end{minipage}%
	\begin{minipage}[t]{.33\linewidth}
	\centering
	\includegraphics[]{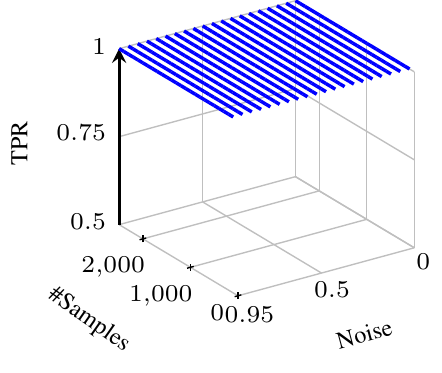}
	\end{minipage}%
	\begin{minipage}[t]{.33\linewidth}
	\centering
	\includegraphics[]{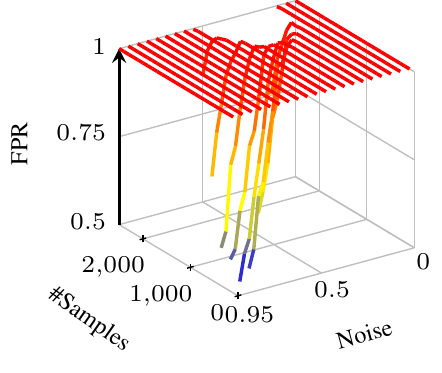}
	\end{minipage}%
	\caption{Accuracy, true positive (TPR) and false positive rates (FPR) of \JIC for identifying d-separation. We use varying samples sizes and additive noise percentages, where a noise level of $0.95$ refers to $95 \%$ additive noise.} 
	\label{fig:jic_details}
\end{figure}

\subsection{Identifying the Parents}

In this experiment, we test the type II error. This we do by generating a certain number of parents $\pa_T$ from which we generate a target node $T$. To generate the parents, we use either a
\begin{itemize}
	\item uniform distribution with a domain size $d$ drawn uniformly with $d \sim \text{unif}(2,5)$,
	\item geometric distribution with parameter $p \sim \text{unif}(0.6,0.8)$,
	\item hyper-geometric distribution with parameter $K \sim \text{unif}(4,6)$, or
	\item poisson distribution with parameter $\lambda \sim \text{unif}(1,2)$.
\end{itemize}
Given the parents, we generate $T$ as a mapping from the Cartesian product of the parents to $T$ plus $10 \%$ additive uniform noise. Then we generate for each distribution $200$ data sets with $2 \; 000$ samples, per number of parents $k \in \{2, \dots, 7 \}$. We apply $\SCI_f$, $\SCI_q$, $\CMI_{\Gamma}$ and $G^2$ on each data set and we check for each $p \in \pa_T$ whether they output the correct result, that is, $p \not \independent T \mid \pa_T \backslash \{ p \}$.

We plot the averaged results for each $k$ in Figure~\ref{fig:parents_identified}. It can clearly be observed that $\SCI_f$ performs best and still has near to $100 \%$ accuracy for seven parents. Although not plotted here, we can add that the competitors struggled most with the data drawn from the poisson distribution. We assume that this is due to the fact that the domain sizes for these data sets were on average larger than for all other tested distributions.

In the next experiment, we generate parents and target in the same way as mentioned above, whereas we now fix the number of parents to three. In addition, we generate $k \in \{ 1, \dots, 7 \}$ random variables $N$ that are drawn jointly independent from $T$ and $\pa_T$ and are uniformly distributed as described above. Then we test whether the conditional independence tests under consideration can still identify for each $p \in \pa_T$ that  $p \not \independent T \mid N \cup \pa_T \backslash \{ p \}$.

The averaged results for $G^2$, $\JIC$, $\SCI_f$, $\SCI_q$ and $\CMI_{\Gamma}$ are plotted in Figure~\ref{fig:parents_identified}. Notice that the results for $G^2$ are barely visible, as they are close to zero for each setup. In general, the trend that we observe is similar to the previous experiment, except that the differences between $\SCI_f$ and its competitors are even larger.

\begin{figure}[h]
	\begin{minipage}[t]{.5\linewidth}
	\centering
	\includegraphics[]{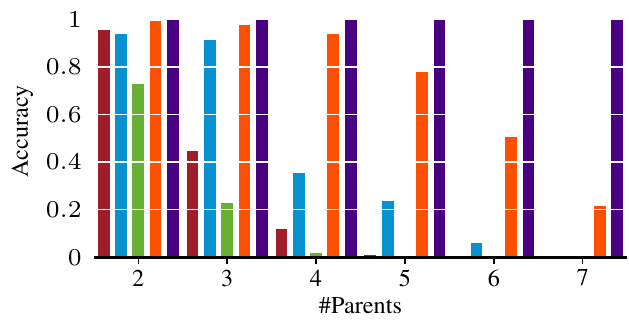}
	\end{minipage}%
	\begin{minipage}[t]{.5\linewidth}
	\centering
	\includegraphics[]{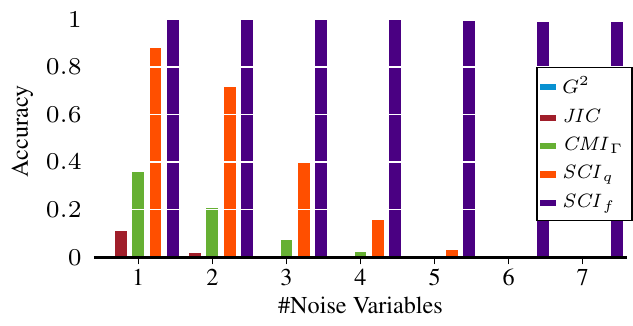}
	\end{minipage}
	\caption{Left: Percentage of parents identified, where we start with only two parents and increase the number of parents to seven. Right: Percentage of parents identified, where we always use three parents, add independently drawn noise variables to the conditioning set.}
	\label{fig:parents_identified}
\end{figure}

\begin{figure}[t]%
	\begin{minipage}[t]{.5\linewidth}
	\centering
	\includegraphics[]{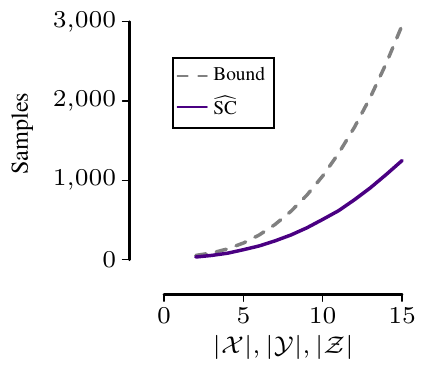}
	\end{minipage}%
	\begin{minipage}[t]{.5\linewidth}
	\centering
	\includegraphics[]{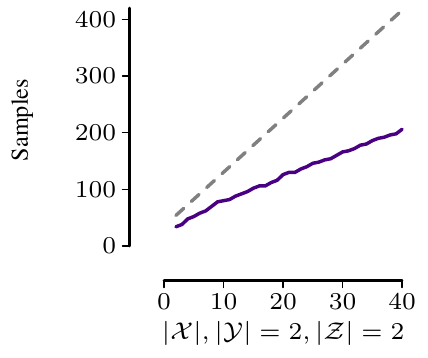}
	\end{minipage}%
	\newline
	\begin{minipage}[t]{.5\linewidth}
	\centering
	\includegraphics[]{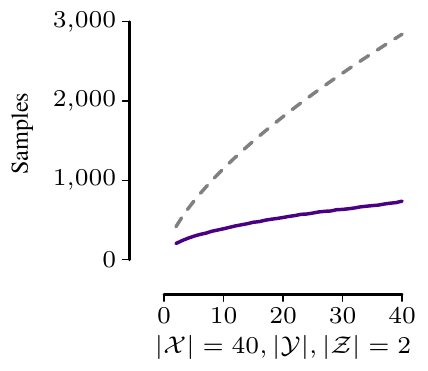}
	\end{minipage}%
	\begin{minipage}[t]{.5\linewidth}
	\centering
	\includegraphics[]{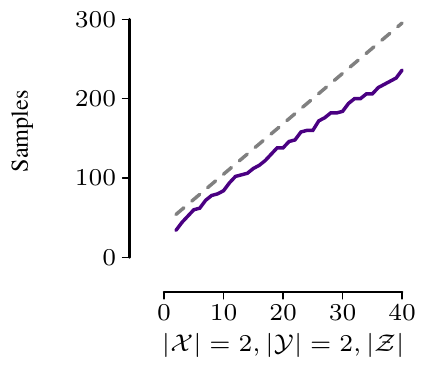}
	\end{minipage}%
	\caption{Estimated sample complexities for independently generated data s.t. $P(|\SCI_f^n / n - I | \ge 0.05) \le 0.05$ and the suggested bound function that is calculated as $35 + 2|\mathcal{X}| |\mathcal{Y}|^{\frac{2}{3}} (|\mathcal{Z}|+1)$. For all setups, increasing the domain size of $X$, $Y$, $Z$ together or independently, the bound function is larger than the empirical value.}
	\label{fig:sample-complexity-in-depth}
\end{figure}

\subsection{Empirical Sample Complexity}

To give an intuition to the sample complexity of \SCI, we provide an empirical evaluation. The goal of this section is to show that there exists a bound for the sample complexity of \SCI, that is sub-linear w.r.t the size Cartesian product of the domain sizes and always larger than the bounds calculated from synthetic data. However, we do not argue that this is the minimal bound that can be found, nor that it is impossible to pass the bound, as we can only evaluate a subset of all possible data sets. What makes us optimistic is that it has been shown that there exists an algorithm with sub-linear sample complexity to estimate \CMI~\citep{canonne:18:sublinear:samples}.

The problem that we would like to solve is to provide a formula that calculates the number of samples $n$ such that $P(|\SCI_f^n(X;Y \mid Z)-I(X;Y \mid Z)| \ge \epsilon) \le \delta$, for small $\epsilon$ and $\delta$. Thereby, we focus on an $n$ such that the probability of making a type I error, i.e. rejecting independence when $H_0 \colon X \independent Y \mid Z$ is true, is low. In our empirical evaluation, we set $\epsilon = \delta = 0.05$ and draw samples from data with the ground truth $I(X;Y \mid Z)=0$ by assigning equal probabilities to each value combination of $X$, $Y$ and $Z$---i.e. we set $P(x,y,z) = \frac{1}{|\mathcal{X}||\mathcal{Y}||\mathcal{Z}|}$ for each value configuration $(x,y,z) \in \mathcal{X} \times \mathcal{Y} \times \mathcal{Z}$. We conduct empirical evaluations for varying domain sizes of $X$, $Y$ and $Z$, where we define w.l.o.g. $|\mathcal{X}| \ge |\mathcal{Y}|$, as the test is symmetric. For each combination of domain sizes, we calculate $P(|\SCI_f^n(X;Y \mid Z)-I(X;Y \mid Z)| \ge \epsilon) = P(\SCI_f^n(X;Y \mid Z) \ge 0.05) \le 0.05$ as follows: We start with a small $n$, e.g. $2$, generate $1 \; 000$ data sets and check if over those data sets $P(\SCI_f^n(X;Y \mid Z) \ge 0.05) \le 0.05$ holds. If not, we increase $n$ by the minimum domain size of $X$, $Y$ and $Z$. We repeat this procedure until we reach an $n$, for which $P(\SCI_f^n(X;Y \mid Z) \ge 0.05) \le 0.05$ holds and report this $n$.

In Figure~\ref{fig:sample-complexity-in-depth} we plot those values for varying either the domain sizes of $X$, $Y$ or $Z$ independently or jointly. From these evaluations, we handcrafted a formula that shows that it is possible to find an $n$ that is sub-linear w.r.t. the domain sizes of $X$, $Y$ and $Z$ for which empirically $P(\SCI_f^n(X;Y \mid Z) \ge 0.05) \le 0.05$ always holds. Hence, we additionally plot for each domain size the corresponding suggested bound for the sample complexity w.r.t. the formula $35 + 2|\mathcal{X}| |\mathcal{Y}|^{\frac{2}{3}} (|\mathcal{Z}|+1)$. We observe that the empirical values for $n$ are always smaller than the values provided by this formula. We want to emphasize that this is only an example function to show the existence of a sub-linear bound for this data. From the plots we would expect that there exists a tighter bound, however, we did not optimize for that. For future work we would like to theoretically validate a sub-linear bound function.

\end{document}